\newcommand{\tab}{\hspace*{2em}}
\begin{document}

\title{Nested Hierarchical Dirichlet Processes for Multi-Level Non-Parametric Admixture Modeling}

\author{\name Lavanya Sita Tekumalla\thanks{The first two authors have contributed equally to the paper} \email lavanya.iisc@gmail.com \\
       \addr Dept of Computer Science and Automation\\
       Indian Institute Of Science\\
       Bangalore, India\\
       \AND
       \name Priyanka Agrawal\footnotemark[1] \email priyanka.svnit@gmail.com \\
       \addr IBM Research Labs\\       
       Bangalore, India\\
       \AND
       \name Indrajit Bhattacharya \email indrajitb@gmail.com \\
       \addr IBM Research Labs\\       
       Bangalore, India}

 \editor{}	

\maketitle

\begin{abstract}

Dirichlet Process(DP) is a Bayesian non-parametric prior  for infinite mixture modeling, 
where the number of mixture components grows with the number of data items.
The Hierarchical Dirichlet Process (HDP), often used for non-parametric topic modeling, 
is an extension of DP for grouped data, 
where each group is a mixture over shared mixture densities. The Nested Dirichlet Process (nDP), 
on the other hand, is an extension of the DP for learning group level distributions 
from data, simultaneously clustering the groups. It allows group level distributions to be shared 
across groups in a non-parametric setting, leading to a non-parametric mixture of mixtures.
The nCRF extends the nDP for multi-level non-parametric mixture modeling, enabling modeling topic hierarchies.
However, the nDP and nCRF do not allow sharing of distributions as required in many applications, motivating the need for 
multi-level non-parametric \textit{admixture} modeling. We address this gap by proposing multi-level nested 
HDPs (nHDP) where the base distribution of the HDP is itself a HDP at each level thereby leading to
admixtures of admixtures at each level.

We motivate the need for nHDP by applying a two-level version of it for non-parametric entity topic modeling,
where an inner HDP creates a countably infinite set of topic mixtures and associates them with entities, while an outer HDP 
associates documents with these entities or topic mixtures. Making use of a multi-level nested Chinese Restaurant Franchise 
(nCRF) representation for the nested HDP, we propose a collapsed Gibbs sampling based inference algorithm for 
the model. Because of couplings between various HDP levels, scaling up is naturally a challenge for the inference 
algorithm. We propose a scalable inference algorithm by extending the direct sampling scheme of the HDP to multiple levels. 
In our experiments for non-parametric entity topic modeling on two real world research corpora, we show that, 
even when large fractions of author entities are hidden, the nHDP is able to generalize significantly better than 
existing models. More importantly, using  nHDP, we are able to detect missing authors at a reasonable level of accuracy.


\end{abstract}

\section{Introduction}\label{sec:intro}
Dirichlet Process mixture models [\cite{antoniak:aos74}] allow for non-parametric or infinite mixture modeling, where the number 
of densities or mixture components is not fixed ahead of time, but is allowed to grow (slowly) with the number of data items.
This is achieved by using as a prior the Dirichlet Process (DP), which is a distribution over distributions, and has the additional 
property that draws from it are discrete (w.p. 1) with infinite support [\cite{antoniak:aos74,ferguson:aos73}].
The popular LDA model [\cite{LDA}] may be considered as a parametric restriction of the HDP mixture model.
LDA and its non-parametric counterpart HDP have since been used extensively as a prior for modeling of text collections 
[ \cite{hdpapp1, hdpapp2}].
However, many applications require joint analysis of groups of data, such as a collection of text documents, where the mixture
components, or topics (as they are called for text data), are shared across the documents.
This calls for a coupling of multiple DPs, one for each document, where the base distribution is discrete, and shared. 
The hierarchical Dirichlet Process (HDP) [\cite{HDP}] does so by placing a DP prior on a shared base distribution, so that the 
model now has two levels of DPs.

The HDP mixture model belongs to the family of non-parametric admixture models [\cite{erosheva:pnas04}], where 
each composite data item or group gets assigned to a mixture over the mixture components or topics, enabling 
group specific mixtures to share mixture components. Hence the HDP family leads to group level distributions with share mixture 
component distributions leading to a family of distributions over distributions.
While this adds more flexibility to the groups of data items, the ability to cluster groups themselves is lost, since each group now has 
a distinct mixture of topics associated with it. This additional capability is desired in many applications. For instance, consider 
the  analysis of patient profiles in hospitals [\cite{NDP}], where we would like to cluster patients in each hospital and 
additionally cluster the hospitals with common distributions over patient profiles. This is achieved by constructing a DP mixture 
over possible group level distributions from which distribution for each hospital is drawn, thus clustering hospitals based on the
specific group level distribution chosen. This DP mixture has a base distribution that is itself a DP 
(instead of a draw from a DP, like in the case of HDP), from which the group level distributions
over patient profiles are drawn. Since the patient profiles are themselves appropriately chosen distributions, 
the nDP results in a distribution over distributions over distributions, unlike the HDP and the DP, 
which are distributions over distributions. The nDP model therefore becomes a prior for non-parametrically modeling  
mixture of mixtures over appropriately chosen component distributions. The nested CRP (nCRP) [\cite{blei:jacm10}], a closely related model, 
proposes a model for \textit{multi-level} hierarchical mixture modeling to discover topic hierarchies of arbitrary depth through 
the predictive distribution obtained by integrating out the DP in a multi-level nDP.  

While the nDP family enables multi-level non-parametric mixture modeling, it is limited by the fact that it does not 
allow sharing of mixture components across group specific distributions at each level.  
For instance, in the previous example, group level distributions in hospitals do not share mixture components (patient profiles).
In several real world applications, a need arises for multi-level non-parametric mixture modeling where at each level, 
group specific mixtures are required to share mixture components. This necessitates \textit{multi-level non-parametric 
admixture modeling}. For instance, imagine a corpus containing descriptions related to entities, such as a 
shared set of researchers who have authored a large body of scientific literature, or a shared set of personalities discussed 
across news articles, such that each entity can be represented as a mixture of topics.
Here, topic mixtures, corresponding to entities, are required to be shared across data groups or documents. In addition, 
 we would like topics themselves to be shared across the topic mixtures corresponding to entities. 

One could attempt to model this problem of non-parametric entity-topic modeling with nDP. The nDP can be imagined as 
first creating a discrete set of mixtures over topics, each mixture representing an entity, and then choosing exactly 
one of these entities for each document. In this sense, the nDP is a mixture of admixtures.
However, a major shortcoming of the nDP for entity analysis is the restrictive assumption of a single entity being 
associated with a document. In research papers, multiple authors are associated with any document, and any news article 
typically discusses multiple news personalities. This requires each document to have a distribution over entities.
In other words, we need a model that is an admixture of admixtures motivating the need for multi-level admixture modeling.

In this paper, we address non-parametric multi-level admixture models. 
To the best of our knowledge, there is no prior work that addresses this problem. 
We propose the nested HDP (nHDP), comprising of multiple levels of HDP, where the base distribution of 
each HDP is itself an HDP.
For inference using the nHDP, we propose the nested CRF (nCRF), which extends the Chinese Restaurant Franchise (CRF) 
analogy of the HDP to multiple levels by integrating out each HDP.
However, due to strong coupling between the CRF layers, inference using the nCRF poses computational challenges.
We propose a scalable algorithm for inference in the multi-level setting with a direct sampling scheme, 
based on that for the HDP, where the mixture component associated with an observation is directly sampled at each level  
, based on the counts of table assignments and stick-breaking weights at each of the levels.

We apply the two-level nHDP to address the problem of non-parametric entity topic analysis for simultaneous
discovery of entities and topics from document collections. The two-level nHDP belongs to the same class of models 
as a two-level nDP, in the sense that it specifies a distribution over distributions (entities) over 
distributions (topics). However, unlike the nDP, it first creates a discrete set of entities, and models each group as a 
document specific mixture over these entities using a HDP. Similarly, it creates a discrete set of topics and models 
each entity as a distribution over these topics using another level of HDP leading to two levels of HDPs. 
Apart from addressing the novel problem of multi-level admixture modeling, to the best of our knowledge, ours is the first 
attempt at entity topic modeling that is non-parametric in both entities and topics. 
The Author Topic Model falls out as a parametric version of this model, when the entity set is observed for each document, 
and the number of topics is fixed. Using experiments over publication datasets using author entities from NIPS and DBLP, 
we show that the nHDP generalizes better under different levels of available author information.
More interestingly, the model is able to detect authors completely hidden in the entire corpus with reasonable accuracy.       

\section{Related Work}\label{sec:rw}	

In this section, we review existing literature on Bayesian nonparametric modeling and entity-topic analysis.

\vspace{0.05in}
\noindent
{\bf Bayesian Nonparametric Models: }
We will review the Dirichlet Process (DP) [\cite{ferguson:aos73}], the Hierarchical Dirichlet Process (HDP) [\cite{HDP}] and the nested Dirichlet Process (nDP) \cite{NDP} in detail in the Sec. \ref{sec:bg}.

The MLC-HDP [\cite{wulsin:icml12}] is a $3$-layer model proposed for human brain seizures data.
The $2$-level truncation of the model is closely related to the HDP and the nDP.
Like the HDP, it shares mixture components across groups (documents) and assigns individual data points to the same set of mixtures, and like the nDP it clusters each of the groups or documents using a higher level mixture.
In other words, this is a nonparametric mixture of admixtures, while our proposed nested HDP is a nonparametric admixture of admixtures. 

The nested Chinese Restaurant Process (nCRP) [\cite{blei:jacm10}] extends the Chinese Restaurant Process analogy of the Dirichlet Process to an infinitely-branched tree 
structure over restaurants to define a distribution over finite length paths of trees.
This can be used as a prior to learn hierarchical topics from documents, where each topic corresponds to a node of this tree, 
and each document is generated by a random path over these topics. The nCRP is also closely connected to the nDP in that 
the predictive distribution obtained by integrating out the DPs at each level from a K-level nDP leads to an nCRP. 
However, while the nCRP and the nDP facilitate multi-level non-parametric mixture modeling, they are not suitable for modeling multi-level 
non-parametric admixtures.

An extension to the nCRP model, also called the nested HDP, has recently been proposed on Arvix [\cite{paisley:arxiv12}]. 
In the spirit of the HDP, which has a top level DP and providing base distributions for document specific DPs, 
this model has a top level nCRP, which becomes the base distribution for document specific nCRPs.
In contrast, our model for multi-level non-parametric admixtures has nested HDPs, in the sense that one HDP directly serves as the base distribution for another HDP, 
like in the nested DP [\cite{NDP}], where one DP serves as the base distribution for another DP.
This parallel with the nested DP motivates the nomenclature of our model as the nested HDP.

Next, we briefly review prior work on entity-topic modeling, that involves simultaneously modeling entities and topics 
in documents, an application we use throughout the paper to motivate our model.
The literature mostly contains parametric models, where the number of topics and entities are known ahead of time.
The LDA model [\cite{LDA}] is the most popular parametric topic model, that infers a known number of latent topics
from document collections. The LDA models the document as a distribution over a finite set of topics and the topics as 
distribution over words.
The author-topic model (ATM) [\cite{AT}] extends the LDA to capture {\it known} authors of each document by modeling a 
document as a unifom distribution over a known author set and authors as distributions over topics, which are
themselves distribution over words. Hence, the ATM can be used for parametric entity-topic modeling where the 
authors correspond to entities in documents.
The Author Recipient Topic model [\cite{ART}] distinguishes between sender and recipient entities and learns the topics and topic distributions of sender-recipient pairs. 
In [\cite{ETM}], the authors analyze entity-topic relationships from textual data containing entity words and topic words, which are pre-annotated. 
The Entity Topic Model [\cite{ETM2}] proposes a generative model which is parametric in both entities and topics and assumes observed entities for each document.

There has been very little work on nonparametric entity-topic modeling, which would enable discovery of entities in settings where entities are partially or completely unobserved in documents.  
The Author Disambiguation Model, [\cite{AD}] is a nonparametric model for the author entities along with topics. 
Primarily focusing on author disambiguation from noisy mentions of author names in documents, this model treats 
entities and topics symmetrically, generating entity-topic pairs from a DP prior.
Contrary to this approach, our model is capable of treating the entity as a distribution over topics, 
thus explicitly modeling the fact that authors of documents have preferences over specific topics. 
We perform experiments  in section \ref{experiments} to demonstrate the effectiveness of our model for non-parametric entity topic analysis.

\section{Background}\label{sec:bg}
Consider a setting where observations are organized in groups. 
Let $x_{ji}$ denote the $i$-th observation in $j$-th group. 
For a corpus of documents, $x_{ji}$ is the $i$-th word occurrence in the $j$-th document.
In the context of this paper, we will use group synonymously with document, data item with word in a document.
We assume that each $x_{ji}$ is independently drawn from a mixture model and has a mixture component parameterized 
by a \textit{factor}, say $\theta_{ji}$, representing a topic, associated with it. We let these factors themselves 
be drawn independantly from a distribution $\bar H$.
For each group $j$, let the associated factors $\bm{\theta}_j = (\theta_{j1}, \theta_{j2}, \hdots)$ have a prior 
distribution $G_j$. Finally, let $F(\theta_{ji})$ denote the distribution of $x_{ji}$ given factor $\theta_{ji}$. 
Therefore, the generative model is given  by
\begin{eqnarray}
\label{eq:mixture}
\theta_{ji} |G_j \sim G_j;\ x_{ji} |\theta_{ji} \sim F(\theta_{ji}),\ \forall j,i
\end{eqnarray}

The central question in analyzing a corpus of documents is the parametrization of the $G_j$ distributions --- 
what parameters to share and what priors to place on them.
The LDA model [\cite{LDA}] is the most popular parametric topic model, that assumes $G_j \sim Dir(\alpha/K)$ 
is a distribution over a finite number of $k$ topics for each document. The choice of Dirichlet prior is 
based on the conjugacy of the Dirichlet distribution with the multinomial, that leads to efficient inference.
However, in most realistic scenarios, the number of topics $K$ is not known in advance.

Bayesian Non-parametric modeling, is a paradigm that enables us to choose a prior for 
$G_j$ that allows for a countably infinite number of mixture components. This enables working with mixture models
without having to fix the number of mixture components in advance by working with $G_j$ of the form 
$G_j=\sum_{k=1}^\infty \beta_k \delta_{\phi_k}$ with atoms $\phi_k \sim \bar H$, a base distribution.
We start with such a prior, the Dirichlet Process that considers each of the $G_j$ distributions in isolation, then the 
Hierarchical Dirichlet Process that ensures sharing of atoms among the different $G_j$s, and finally the nested 
Dirichlet Process that additionally clusters the groups by ensuring that all the $G_j$s are not distinct.

\vspace{0.05in}
\noindent
{\bf Dirichlet Process: }
We start with a formal definition of the Dirichlet process as a prior for the $G_j$ distribution.
Let ($\Theta$, $\mathcal{B}$) be a measurable space.
A Dirichlet Process (DP) [\cite{ferguson:aos73,antoniak:aos74}] is a measure over measures $G_j$ on that space.
Let $\bar H$ be a finite measure on the space. Let $\alpha$ be a positive real number.
We say that $G_j$ is DP distributed with concentration parameter $\alpha$ and base distribution $\bar H$, written 
$G_j \sim $DP($\alpha, \bar H)$, if  
for any finite measurable partition $(A_1, \hdots, A_r)$ of $\Theta$, we have
\begin{equation}
\label{eqn:dp}
(G_j(A_1),  \hdots G_j(A_r)) \sim Dir(\alpha \bar H(A_1),  \hdots, \alpha \bar H(A_r)).
\end{equation}

The {\it stick-breaking representation} provides a constructive definition for samples drawn from a DP, 
by explicitly drawing the mixture weights for $G_j$.
It can be shown [\cite{sethuraman:ss94}] that a draw $G_j$ from $DP(\alpha,\bar H)$ can be written as
\begin{eqnarray}
\label{eq:dpStick}
&\phi_k \stackrel{iid}{\sim} \bar H,\ k=1\ldots\infty;\ \ \ 
w_i\sim Beta(1,\alpha);\ \beta_i=w_i\prod_{j=1}^{i-1}(1-w_j) \nonumber \\
& G_j = \sum_{k=1}^\infty \beta_k \delta_{\phi_k},
\end{eqnarray}
where the atoms $\phi_k$ are drawn independently from $\bar H$ and the corresponding weights $\{\beta_k\}$ follow a 
stick breaking construction. This is also called the GEM distribution: $(\beta_k)_{k=1}^\infty \sim \mbox{GEM}(\alpha)$. 
The stick breaking construction shows that draws from the DP are necessarily discrete, with infinite support, and the DP 
therefore is suitable as a prior distribution on mixture components for `infinite' mixture models. 
Subsequently, $\{\theta_{ji}\}$ are drawn from $G_j$, followed by draws $\{x_{ji}\}$ (similar to Eqn. \ref{eq:mixture}).
The generation of $G_j$ from the DP prior followed by the generation of $\{\theta_{ji}\}$ and 
$\{x_{j,i}\}$ constitutes the \textit{Dirichlet Process mixture model} [\cite{ferguson:aos73}].


Another commonly used perspective of the DP is the {\it Chinese Restaurant Process} (CRP) [\cite{pitman:ln02}]
which shows that DP tends to clusters draws $\theta_{ji}$ from $G_j$. 
Let $\{\theta_{ji}\}$ denote the sequence of draws from $G_j$, and let $\{\phi_k\}$ be the atoms of $G_j$.
The CRP considers the predictive distribution of the $i$-th draw $\theta_{ji}$ given the first $i-1$ draws 
$\theta_{j1}\ldots \theta_{ji-1}$ after integrating out $G_j$: 
\begin{equation}
\label{eq:crp}
\theta_{ji} | \theta_{j1}, \hdots, \theta_{ji-1}, \alpha, \bar H \sim \sum\limits_{k=1}^{K} \frac{n_{jk}}{i-1+\alpha} \delta_{\phi_k} + \frac{\alpha}{i-1+\alpha} \bar H
\end{equation}
where $n_{jk}=\sum_{i'=1}^{i-1} \delta(\theta_{ji'},\phi_k)$.
The above conditional may be understood in terms of the following restaurant analogy.
Consider an initially empty `restaurant' with index $j$ that can accommodate an infinite number of `tables'. 
The $i$-th `customer' entering the restaurant chooses a table $\theta_{ji}$ for himself, 
conditioned on the seating arrangement of all previous customers.
He chooses the $k$-th table with probability proportional to $n_{jk}$, the number of people already seated at the table, 
and with probability proportional to $\alpha$, he chooses a new (currently unoccupied) table. 
Whenever a new table is chosen, a new `dish' $\phi_k$ is drawn ($\phi_k \sim \bar H$) and associated with the table.
The CRP readily lends itself to sampling-based inference strategies for the DP.

\vspace{0.05in}
\noindent
{\bf Hierarchical Dirichlet Process: }
Now reconsider our grouped data setting. 
If each $G_j$ is drawn independently from a DP, then w.p. 1 the atoms $\{\phi_{jk}\}_{k=1}^\infty$ for each $G_j$ are 
distinct, when $\bar H$, the base distribution is continuous.
This would mean that there is no shared topic across documents, which is undesirable.
The Hierarchical Dirichlet Process (HDP) [\cite{HDP}] addresses this problem by modeling the base distribution of the DP 
prior in turn as a draw $G_B$ from a DP, instead of the continuous distribution $\bar H$. 
Since draws from a DP are discrete, this ensures that the same atoms $\{\phi_k\}$ are shared across all the $G_j$s.
Specifically, given a distribution $\bar H$ on the space ($\Theta$, $\mathcal{B}$) and positive real numbers 
$(\alpha_{j})_{j=1}^{M}$ and $\gamma$, we denote as $\textnormal{HDP}(\bm\alpha, \gamma, \bar H)$ the following 
generative process: 
\begin{alignat}{2}
\label{eqn:hdp}
&G_B | \gamma,\bar H \sim DP(\gamma,\bar H) ~~~~~~~~~ && \notag \\
&G_j | \alpha_j,G_B \sim DP(\alpha_j,G_B) &&\forall j.
\end{alignat}
When the generation of $G_j$s as described in Eqn. \ref{eqn:hdp} is followed by generation of $\{\theta_{ji}\}$ and $\{x_{ji}\}$ as in Eqn. \ref{eq:mixture}, we get the \textit{HDP mixture model}.

 
Using the stick-breaking construction, the global measure $G_B$ distributed as Dirichlet process can be expressed as 
$G_B = \sum_{k=1}^\infty \beta_k \delta_{\phi_k}$,
where the topics $\phi_k$ as before are drawn from $\bar H$ independently ($\phi_k \sim \bar H$) and the stick--breaking weights $ \bm{\beta} \sim $ GEM$(\gamma)$ represent `global' popularities of these topics.
Since $G_B$ has as its support the topics $\bm\phi$, each group-specific distribution $G_j$ necessarily has support at these topics, and can be written as follows:
\begin{equation}
\label{gJstick}
G_j = \sum_{k=1}^\infty \pi_{jk} \delta_{\phi_k};\ \ (\pi_{jk})_{k=1}^\infty \sim \textnormal{DP}(\alpha_j, \bm\beta)
\end{equation}
where $\bm\pi_j= (\pi_{jk})_{k=1}^\infty$ denotes the topic popularities for the $j$th group.

Analogously to the CRP for the DP, the Chinese Restaurant Franchise provides an interpretation of predictive distribution for the next draw from an HDP after integrating out the $G_j$s and $G_B$.
Let $\{\theta_{ji}\}$ denote the sequence of draws from each $G_j$, $\{\psi_{jt}\}$ the sequence of draws from $G_B$, and $\{\phi_k\}_{k=1}^\infty$ the sequence of draws from $\bar H$.
Then the conditional distribution of $\theta_{ji}$ given $\theta_{j1}, \hdots, \theta_{j,i-1}$ and $G_B$, after integrating 
out $G_j$ is as follows (similar to that in Eqn. \ref{eq:crp}):
\begin{equation}
\label{eqn:thetaHDP}
\begin{split}
\theta_{ji}|\theta_{j1}, \hdots, \theta_{j,i-1},  \alpha, G_B \sim
\sum_{t=1}^{m_{j\cdot}} \frac{n_{jt\cdot}}{i-1+\alpha} \delta_{\psi_{jt}} + \frac{\alpha}{n_{j\cdot\cdot}+\alpha} G_B
\end{split}
\end{equation}
where $n_{jtk}=\sum_{i'=1}^{i-1}\delta(\theta_{ji'},\psi_{jt})\delta(\psi_{jt},\phi_k)$, $m_{jk}=\sum_{t}\delta(\psi_{jt},\phi_k)$ and dots indicate marginal counts. 
As $G_B$ is also distributed according to a Dirichlet Process, we can integrate it out similarly to get the conditional distribution of $\psi_{jt}$:
\begin{equation}
\label{eqn:psiHDP}
\begin{split}
\psi_{jt}|\psi_{11},  \psi_{12}, \hdots, \psi_{21},  \hdots, &\psi_{jt-1}, \gamma, \bar H \sim\sum_{k=1}^{K} \frac{m_{\cdot k}}{m_{\cdot\cdot} + \gamma} \delta_{\phi_{k}} + \frac{\gamma}{m_{\cdot\cdot} + \gamma} \bar H
\end{split}
\end{equation}
These equations may be interpreted using a restaurant analogy with tables and dishes. 
Consider a set of restaurants, one corresponding to each group.
Customers entering each of the restaurants select a table $\theta_{ji}$ according a group specific CRP (Eqn \ref{eqn:thetaHDP}).
The restaurants share a common menu of dishes $\{\phi_k\}$.
Dishes are assigned to the tables of each restaurant according to another CRP (Eqn \ref{eqn:psiHDP}).
Let $t_{ji}$ be the (table) index of the element of $\{\psi_{jt}\}_j$ associated with $\theta_{ji}$, and let $k_{jt}$ be the (dish) index of the element of $\{\phi_k\}$ associated with $\psi_{jt}$.
Then the two conditional distributions above can also be written in terms of the indexes $\{t_{ji}\}$ and $\{k_{jt}\}$ instead of referring to the distributions directly.
If we draw $\psi_{jt}$ via choosing a summation term, we set $\psi_{jt}= \phi_k$ and let $k_{jt} =k$ for the chosen $k$. 
If the second term is chosen, we increment $K$ by 1 and draw $\phi_K \sim \bar H$ and set $\psi_{jt} = \phi_K$ and $k_{jt} = K$.
This CRF analogy leads to efficient Gibbs sampling-based inference strategies for the HDP mixture model [\cite{HDP}].





\vspace{0.05in}
\noindent
{\bf Nested Dirichlet Process: }
In other applications of grouped data, we may want to cluster observations in each group by learning group specific 
a mixture distributions and simultaneously cluster these group specific distributions inducing a clustering over 
the groups themselves. 
For example, when analyzing patient records in multiple hospitals, we may want to cluster the patients in each hospital 
by learning a distribution over patient profiles and cluster hospitals having the same distribution over patient profiles. 
The HDP cannot do this, since each group specific mixture $G_j$ is distinct.This problem is addressed by the nested Dirichlet Process [\cite{NDP}].

This problem is addressed by the nested Dirichlet Process [\cite{NDP}], which first defines a set $\{G^0_{r}\}_{r=1}^\infty$ of 
 distributions  with an infinite support:
\begin{equation}
\label{nDPsb1}
G^0_{r} = \sum_{k=1}^\infty \pi^0_{rk}\delta_{\phi^0_{rk}},\ \phi^0_{r k}\sim \bar H,\ \{\pi^0_{r k}\}_{k=1}^\infty \sim GEM(\gamma^0)
\end{equation}
and then draws the group specific distributions, that we now term as $G^1_j$, from a mixture over 
these set of $\{G^0_{r}\}$:
$$G^1_j\sim G^1_B\equiv\sum_{r=1}^\infty \beta^0_{r} \delta_{G^0_{r}},\ \{\beta^0_{r}\}\sim GEM(\gamma^1)$$
We denote the generation process as $\{G^1_j\}\sim nDP(\gamma^1,\gamma^0,\bar H).$ 
The process ensures non-zero probability of different groups selecting the same $G^0_{r}$, 
leading to clustering of the groups
themselves.
Using Eqn. \ref{eq:dpStick}, the draws $\{G^1_j\}$ can be characterized as: 
\begin{equation}
\label{eqn:ndp:dpfamily}
G^1_j\sim G^1_B,\ G^1_B \sim DP(\gamma^1, DP(\gamma^0, \bar H)) 
\end{equation}
where the base distribution of the outer DP is in turn another DP, unlike the HDP where it is DP distributed.
Thus the nDP can be viewed as a distribution on the space of distributions on distributions. 

The nDP can be expressed with the following restaurant analogy with two levels of restaurants. 
Each group (hospital/document) is associated with an `outer' level $1$ restaurant while each distribution 
$G^0_{r}$ corresponds to an `inner' level $0$ restaurant.
Each outer restaurant picks a distribution $G^1_j$, through picking a 'dish' from a 
global menu of dishes across outer restaurants based on the dish's popularity according to $G^1_B$.
Each dish in this menu, that corresponds to a unique inner restaurant, defines a specific distribution over patient profiles.
Hence each outer restaurant gets a distribution corresponding to one of the inner restaurants through this process, 
leading to a grouping of the outer restaurants (hospitals) based on the inner restaurant (distribution over patient profiles) chosen.
The $i^{th}$ customer entering an outer restaurant $j$ goes to the corresponding inner restaurant, with index $r$,  
such that $G^1_j=G^0_{r}$. Now the customer selects a table in this restaurant, with the index, say, $k$. 
The data is generated from the corresponding $F(\phi_{r k})$.

\textit{\underline {A Note on Notation:}} nDP brings to focus the idea of nesting, where the the distributions 
at one level ($\{G^0_r\}$ at level 0) are themselves atoms for the next level (level 1 mixture distribution $G^1_B$).
Hence, with the nDP, we introduce  the notion of \textit{levels} into 
our notation through superscripts for random variables. For the rest of the paper \textit{the superscript of a random variable 
 indicates the level} of the variable. 
Table \ref{tab:notation} shows a ready summary of the notation used through the 
rest of the paper.

{\bf Nested Chinese Restaurant Process: }
The nDP can be viewed as a tool for building a non-parametric mixture of mixtures.
The Nested Chinese Restaurant Process (nCRP) [\cite{blei:jacm10}],  
is a closely related model for multi-level clustering. 
The nCRP extends CRP by creating an infinitely-branched tree structure over restaurants to define a distribution over 
finite length paths of trees for modeling topic hierarchies from documents. 
The nCRP can be interpreted with a restaurant analogy consisting of multiple levels of restaurants as follows as described in 
[\cite{blei:jacm10}].
`` \textit{A tourist arrives at the city for an culinary vacation. On the first evening, he enters
the root Chinese restaurant and selects a dish using the CRP distribution, based on its popularity (equation \ref{eq:crp}).
On the second evening, he goes to the restaurant identiﬁed on the first nightís dish
and chooses a second dish using a CRP distribution based on the popularity of the dishes
in the second night\'s restaurant. He repeats this process forever.}''
The nCRP however is closely connected to the nDP since a K-level nCRP can be obtained by integrating out the DP at each level in 
a K-level nDP facilitating multi-level non-parametric mixture models.

{\bf Multi-level Admixture models:} 
The nDP enables modeling a non-parametric mixture of non-parametric mixtures, while the nCRP provides a 
hierarchical prior for multilevel non-parametric mixture models. In other words, the multi-level nDP leads to a prior where 
each distribution at a specific level $l$, is a mixture over a distinct set of distributions at the previous level $l-1$. 
Hence, there are no atoms in common between distributions at each level. The nDP and multi-level nDP 
are therefore not suited for applications that require mixture components to be shared across group specific distributions at 
each level. Several real world scenarios are however more effectively modeled by 
\textit{\bf \textit{multi-level admixture models}}
where each level has a group of distributions which share mixture components.

A example of \textit{entity-topic modeling} for document collections clearly illustrates the 
limitation of existing models. Here, we would like to model documents as having distributions over a set of 
latent entities, with multiple documents sharing entities. We would like to model the entities themselves as 
distributions over a set of latent topics, with the ability for multiple entities to share topics. 
This constitutes a two level admixture model, where group specific distributions at one level 
(the 'entity' distributions over topics) must share atoms (topics), which are themselves distributions at the previous level 
(the 'topic' distribution over words).

The author-topic model (ATM) [\cite{AT}], an extension of LDA, captures this modeling scenario for the parametric case
where the entities(authors) for each document are observed and the number of topics is known in advance.
Consider a corpus containing  $A$ authors. The ATM captures {\it known} authors, $A_j\subset A$ of each document, 
by modeling documents as a \textit{uniform} distributions $\{G^1_j\}$ over corresponding sets of authors $A_j$ and 
authors as distributions $\{G^0_r\}$ over $K$ topics. The words are generated by first sampling one of the 
known authors $\theta^1_{ji}$ of the document (with  $z^1_{ji}$ holding the global index of this author), 
followed by sampling a topic $\theta^0_{ji}$ from the topic distribution of that author :
\begin{eqnarray}
\theta^1_{ji} |G^1_j \sim G^1_j; ~ \theta^0_{ji} |G^0_j, z^1_{ji}=r  \sim G^0_r; ~
\ x_{ji} |\theta^0_{ji} \sim F(\theta_{ji}),\ \forall j,i
\label{eqn:atm}
\end{eqnarray}

The ATM however cannot handle a more realistic scenario of non-parametric modeling where the number of topics is not 
fixed in advance and author set for each document is not fully observed. Such an application calls for 
{\bf multi-level non-parametric admixture modeling}, a previously unexplored problem.
Motivated by this, we propose the nested Hierarchical Dirichlet Process(nHDP) for multi-level non-parametric admixture modeling.

%
%

\section{Nested Hierarchical Dirichlet Processes}
In this section, we introduce the Nested Hierarchical Dirichlet Processes. 
For this, we first introduce 2-nHDP i.e. the two level nested HDP for non-parametric modeling of entities and topics 
and then generalize this to L-nHDP for any given number of L levels.

\subsection{Two-level Nonparametric Admixture Model}
\label{sec:model}
 
Recall that in [\cite{AT}], the authors approach the problem of modeling the topics and entities for the application of 
author-topic modeling by taking a two level approach. 
Our aim is to build a 2-level admixture 2-nHDP for a non-parametric treatment of this problem. 
However, before this, we first present a simpler intermediate model which we call DP-HDP, an extension of nDP, 
for ungrouped data, where the words are not grouped into documents, leading to a mixture of admixture model.
(This can also be interpreted as a usecase for single document analysis instead of a collection of documents). 
We then gradually extend it for grouped data (multiple documents) to build 2-nHDP modeling non-parametric admixtures of admixtures. 
We next generalize this to (L+1)-nHDP in section \ref{sec:multilevel}.

\vspace{0.05in}
\noindent
{\bf DP-HDP for Ungrouped Data: }
Consider an entity-topic modeling scenario where the observed data i.e. set of words is not grouped as documents. One could conceive performing such two-level modeling for such data with the nDP. In nDP, entities are $\{G^0_r\}_{r=1}^\infty$ of equation \ref{nDPsb1} with $\phi$ as the topic variables drawn from a base distribution $\bar H$.
However, the nDP is unsuitable for such analysis, since the entities drawn from a DP, with 
a continuous base distribution $\bar H$ , do not share topic atoms.
This can be modified by first creating a set of entities  $\{G^0_{r}\}_{r=1}^\infty$ such that 
they share topics. One way to do this 
is to follow the HDP construction for entities: 
\begin{equation}
\label{gS}
G^0_{r}\sim HDP(\{\alpha^0_{r}\},\gamma^0,\bar H), r=1\ldots\infty
\end{equation}
This can be followed by drawing the entity for each word $i$ from a mixture over the $G^0_{r}$s:
\begin{equation}
\label{gJd}
G^1_i\sim G^1_B \equiv \sum_{r=1}^\infty \beta^1_{r}\delta_{G^0_{r}},\ \beta^1\sim GEM(\gamma^1)
\end{equation}
This may be interpreted as creating a countable set of entities $\{G^0_{r}\}$ by defining topic preferences 
(distributions over topics) for each of them, and then defining a `global popularity' $G^1_B$ of the entities.
Using Eqn. \ref{eq:dpStick}, we observe that $G^1_B \sim DP(\gamma^1,HDP($ $\{\alpha^0_{r}\},\gamma^0,\bar H))$.
Observe the relationship with the nDP (Eqn. \ref{eqn:ndp:dpfamily}).
Like nDP, this also defines a distribution over the space of distributions on distributions.
But, instead of a DP base distribution for the outer DP, we have achieved sharing of topics using a HDP base distribution.
We will write $G^1_i\sim \mbox{DP-HDP}(\gamma^1,\{\alpha^0_{r}\},\gamma^0,\bar H)$.

Note that multiple words can choose the same entity. As before, entity $G^1_i$ can now be used as prior for sampling topics, say $\{\theta^0_{i}\}$ for 
individual words which chose that entity, using
\begin{equation}
\label{eq:DP-HDPmm}
\theta_{i}\sim G^1_i,\ x_{j}\sim F(\theta_{i})
\end{equation}
We will call this the DP-HDP mixture model. 
Note that one can also alternatively use this model for grouped data where each group or document is associated with a single entity and each word in the document chooses topic as per the entity distribution over topics.
\vspace{0.05in}
\noindent
{\bf 2-nHDP for Grouped Data: }	
In this section, we extend the earlier model for grouped data since most of the applications use multiple documents e.g. in the form of news articles, scientific literature, images, etc. 

We extend the approach presented in \S~\ref{single} to the setting of grouped data since most applications use multiple documents e.g. news articles, scientific literature, images, etc.
 In the single entity model, since a document is associated with one entity, a single entity is sampled for all the words in the document. Now, in the case of multiple entities per document, first we sample an entity for each word in the document, and then a topic is sampled according to the entity specific distribution of topics. 

As in the previous model, we first create a set of entities $\{G^0_{r}\}_{r=1}^\infty$ as distributions over a common set of topics $\{\phi_{k}\}_{k=1}^\infty$ ($\phi_k\sim \bar H$) by drawing independently from an HDP (Equation \ref{gS}), 
and then create a global mixture $ G^1_B$ over these entities (Equation \ref{gJd}).

Earlier in the absence of groupings, this global popularity was used to sample entities for all the words.
Now, for each document $j$, we define a local popularity of entities, derived from their global popularity $G^1_B$:
\begin{equation}
\label{eqn:ndpLocalPopularity}
G^1_j \equiv \sum_{r=1}^{\infty} \pi^1_{jr}\delta_{G^0_{r}},\ \{\pi^1_{jr}\}\sim DP(\alpha^1_j,\beta^1)
\end{equation}
Now, sampling each factor $\theta^0_{ji}$ in group $j$ is preceded by choosing an entity $\theta^1_{ji}\sim G^1_j$ 
by sampling according to local entity popularity $G^1_j$. Note that $P(\theta^1_{ji}=G^0_{r})=\pi^1_{jr}$. 

Note that the above equation \ref{eqn:ndpLocalPopularity} is similar to the stick breaking definition of HDP in Equation \ref{gJstick}. We can see that $G^1_j$ is drawn from a HDP with the base distribution over atoms $\{G^0_{r}\}$ instead of topics $\{\phi_k\}$. This distribution over  $\{G^0_{r}\}$ is again an HDP.
Therefore, we can write:
\begin{equation}
\label{eqn:ndp}
\theta^1_{ji} \sim G^1_j \sim \mbox{HDP}(\{\alpha^1_j\},\gamma^1,\mbox{HDP}(\{\alpha^0_{r}\},\gamma^0,\bar H))
\end{equation}
We refer to the two HDPs as the inner and outer HDPs and hence, call this as 2-nHDP. We can write $\theta^1_{ji} \sim  2-HDP(\{\alpha^1_j\}, \gamma^1, \{\alpha^0_{r}\}, \gamma^0, \bar H)$.
Similar to the nDP and the DP-HDP (Eqn. \ref{gJd}), this again defines a distribution over the space of distributions 
over distributions. The 2-HDP mixture model is completely defined by subsequently sampling $\theta^0_{ji} \sim \theta^1_{ji}$, 
followed by $x_{ji} \sim F(\theta^0_{ji})$.

An alternative characterization of the 2-nHDP mixture model is using the topic index $z^0_{ji}$ and entity index $z^1_{ji}$ 
corresponding to $x_{ji}$:

\begin{eqnarray}
\label{eq:mixtureMulti}
\beta^0 \sim GEM(\gamma^0);\ \pi^0_{r} \sim DP(\alpha^0, \beta^0) ;\ \phi_k \sim \bar H,\ k,r=1\ldots\infty \nonumber  \\
\beta^1 \sim GEM(\gamma^1)\ ; \pi^1_{j} \sim DP(\alpha^1_j, \beta^1),\ j=1\ldots M \nonumber \\
z^1_{ji} \sim \pi^1_j\ ;\ z^0_{ji} \sim \pi^0_{z^1_{ji}} ;\ x_{ji} \sim F(\phi_{z^0_{ji}}),\ i=1\ldots n_j
\end{eqnarray}

This may be understood as first creating a entity-specific distributions $\pi^0_{r}$ over topics using global 
topic popularities $\beta^0$, followed by creation of document-specific distributions $\pi^1_j$ over entities using 
global entity popularities $\beta^1$.
Using these parameters, the content of the $j^{th}$ document is generated by sampling repeatedly in $iid$ fashion an 
entity index $z^1_{ji}$ using $\pi^1_j$, a topic index $z^0_{ji}$ using $\pi^0_{z^1_{ji}}$ and finally a word 
using $F(\phi_{z^0_{ji}})$.

Observe the connection with the ATM in Eqn. \ref{eqn:atm}.
The main difference is the the set of entities and topics is infinite. 
Separately, each document now has a distinct non-uniform distribution $\pi^1_j$ over entities.

(Move the following to/before background....?)

Also, observe that we have preserved the HDP notation to the extent possible, to facilitate understanding.
To distinguish between variables corresponding to the two HDPs levels in this model, we use the 
superscript $0$ for symbols corresponding to the the inner HDP modeling entities as distributions as topics and 
superscript $1$ for symbols corresponding to the the outer HDP modeling documents as distributions over entities. 
Going forward, we follow the same convention for naming variables in the multi-level HDP with multiple levels 
of nesting.

\subsection{Multi-level Non-parametric Admixture modeling}
\label{sec:multilevel}
We now present (L+1)-nHDP, a generalized extension to 2-HDP proposed in the previous section \ref{sec:model}, that can be used for multi-level non-parametric admixture modeling. 

The 2-nHDP was constructed by first creating a set of entities, $\{G^0_{r}\}_{r=1}^{\infty}$ 
by drawing each of these distributions from an inner HDP with base distribution $\bar H$. 
This is followed by drawing document specific distributions at the outermost level
$\{G^1_{j}\}_{j=1}^{M}$ from the outer HDP, with the base distribution as the inner HDP. 
To extend this to multiple levels, at each level, we draw group level distributions from an HDP with the base distribution 
at as the previous level HDP. 

Let $L+1$ denote the number of levels of nesting, indexed by $l\in\{0,\hdots,L\}$. Through the rest of this section, 
the superscript of a random variable denotes the level of the random variable. 
The nested HDP comprises of multiple levels of HDPs, where the base distribution of HDP at level $l$ is the the HDP at level $l-1$.
The innermost level is 0 while the 
outer most level is $L$. The groups in the outermost level $L$ correspond to documents in the case of entity topic modeling. 
At the inner most level 0, we have a HDP, with a base distribution $\bar H$ from which the inner most level entities are drawn. 
In the case of entity topic modeling these inner level entities are topics that are modeled as a distributions over words. 

At level 0, the inner most level, we draw level-1 entities $\{G^0_{r}\}_{{r}=1}^{\infty}$ from a HDP with base distribution $\bar H$. 
This step corresponds to equation \ref{gS} of the 2-nHDP and 
constitutes a non-parametric admixture over atoms drawn from $\bar H$. Note that in case of two-level models, we had termed $\{G^0_{r}\}$ as entities. In case of this multi-level model, we term these entities as level-1 entities and topics can be considered as level-0 entities. 
Hence, at level 0, we have
%
%
%
\begin{gather}
G^0_{r}\sim HDP(\{\alpha^0_{r}\},\gamma^0,\bar H) \equiv G^0_B, \text{ $r=1,\hdots$ }  \\ \nonumber
\text{Alternately expressed as, } \beta^0 \sim GEM(\gamma^0); G^0_B= \sum_{k=1}^{\infty} \beta^0_k \delta_{\phi_k} \\ \nonumber
G^0_{r} = \sum_{k=1}^{\infty} \pi^0_{r k} \delta_{\phi_k} \text{ where } \{ \pi^0_{r k} \} \sim DP(\alpha^0_{r}, \beta^0), \text{ $r=1,\hdots$ }
\end{gather}
We denote the HDP distribution itself at level $0$ by $H^0$, which subsequently
becomes the base distribution for next level HDPs.
At any level $l\in\{1,2,\hdots,L-1\}$, $H^{l-1}$ becomes the base distribution of the $l^{th}$ level HDP, 
while the group level distributions at the previous level, $G^{l-1}_k, k \in {1,2,\hdots}$, become the atoms for 
the group level distributions that we construct at the $l^{th}$ level, $G^l_{r} , r=1,2, \hdots,$
\begin{gather}
G^l_{r}\sim HDP(\{\alpha^l_{r}\},\gamma^l,H^{l-1}) \equiv H^l, \text{ $r=1,\hdots$ }\\ \nonumber
\text{Alternately expressed as, } \beta^l \sim GEM(\gamma^l) \text{, and } G^l_B= \sum_{k=1}^{\infty} \beta^l_k \delta_{G^{l-1}_k} \\ \nonumber
G^l_{r} = \sum_{k=1}^{\infty} \pi^l_{r k} \delta_{G^{l-1}_k} \text{ where } \{ \pi^l_{r k} \} \sim DP(\alpha^l_{r}, \beta^l), \text{ $r=1,\hdots$ }
\end{gather}
For the HDP at the outermost level $L$, the base distribution is  $H^{L-1}$, the HDP from the previous level. 
At this level we have a set of M groups, that correspond to the number of documents in the case of document modeling. 
While it is possible to develop a multilevel admixture model where the number of groups is unobserved at every level, 
in this paper, we assume the number of groups at the outermost level to be an observed quantity in a fashion aligned 
with the document modeling usecase. Hence, at level $L$, we have, 
\begin{gather}
G^L_{j}\sim HDP(\{\alpha^L_{j}\},\gamma^L,H^{L-1}) \equiv H^{L}, \text{ $r=1,\hdots$ } \\ \nonumber
\text{Alternately expressed as, } \beta^L \sim GEM(\gamma^L) \text{, and } G^L_B= \sum_{k=1}^{\infty} \beta^L_k \delta_{G^{L-1}_k} \\ \nonumber
G^L_{j} = \sum_{j=1}^{\infty} \pi^L_{j k} \delta_{G^{L-1}_k} \text{ where } 
\{ \pi^L_{j k} \} \sim DP(\alpha^L_{j}, \beta^L) , \text{ $r=1,\hdots$ }
\end{gather}
Each observed data item $i\in{1,\hdots, N_j}$ that resides with one of the outermost groups $j\in\{1,\hdots,M\}$ is now 
associated with an entity (group level distribution) from each HDP level $l$ , which itself is a distribution over entities 
drawn from the previous level $l-1$ HDP. 
Hence we generate the data as follows. First generate $\theta^L_{ji} \sim G^L_j$ from the group level distribution 
at the outermost group $j$. For any level $l\in\{L-1,\hdots,0\}$, we select $\theta^l_{ji} \sim \theta^{l+1}_{ji}$. 
Note that  $\theta^l_{ji} \sim \theta^{l+1}_{ji}$ thus sampled is equal to one of the $\{ G^{l-1}_k\}_{k=1}^{\infty}$
variables, (which are themselves distributions over atoms drawn from previous level HDP). 
$\theta^0_{ji}$ is equal to one of 
$\{\phi_k\}_{k=1}^{\infty}$ at the inner most level zero. 
Finally data items are generated as $x_{ji} \sim F(\theta^0_{ji})$. 

Similar to the 2-nHDP, (L+1)-nHDP can be defined using the index $z^l_{ji}$ of the atom $\theta^l_{ji}$ 
 at each level $l$ corresponding to data item $x_{ji}$ as follows.
 \begin{eqnarray}
\label{eq:mixtureMulti}
\beta^0 \sim GEM(\gamma^0);\ \pi^0_{r} \sim DP(\alpha^0, \beta^0) ;\ \phi_k \sim \bar H,\ k,r=1\ldots\infty \nonumber  \\
\beta^l \sim GEM(\gamma^l)\ ; \pi^l_{r} \sim DP(\alpha^l_r, \beta^l),\ r=1\ldots\infty, l=1\ldots L-1 \nonumber \\
\beta^L \sim GEM(\gamma^L)\ ; \pi^L_{j} \sim DP(\alpha^L_j, \beta^L),\ j=1\ldots M \nonumber \\
z^L_{ji} \sim \pi^L_j\ ;\ z^l_{ji} \sim \pi^l_{z^l_{ji}} ;\ x_{ji} \sim F(\phi_{z^0_{ji}}),\ i=1\ldots n_j, l=1\ldots L-1
\end{eqnarray}

%

\subsection{Nested Chinese Restaurant Franchise}
\label{nCRF}
\vspace{0.05in}
\noindent
In this section, we derive the predictive distribution for the next draw $\theta^l_{ji}$ at various levels from the nHDP 
given previous draws, after integrating out the various group level distributions $\{G^l_r\}$ and $G^l_B$ at each level. 
We also provide a restaurant analogy  for the nHDP in terms of multiple levels of nested CRFs, 
corresponding to the multiple levels of HDP.
This will be useful for the inference algorithm that we describe in Section \ref{sec:multi-inference}.

We start with the outermost level L. 
Let $\{\theta^L_{ji}\}_{i=1}^{N_{j}}$ denote the sequence of draws from $G^L_j$, and $\{\psi^L_{jt}\}_{t=1}^{m^L_{j\cdot}}$ 
denote the sequence of draws from $G^L_B$.
Then the conditional distribution of $\theta^L_{ji}$ given all previous draws after integrating out $G^L_j$ looks as follows:
\begin{equation}
\label{thetaLmulti}
\begin{split}
\theta^L_{ji}|\theta^L_{j1:i-1}, \alpha^L_{j}, G^L_B \sim
&\sum_{t=1}^{m^L_{j\cdot}} \frac{n^L_{jt}}{i - 1+\alpha^L_{j}} \delta_{\psi^L_{jt}} + \frac{\alpha^L_{j}}{i - 1 +\alpha^L_{j}} G^L_B
\end{split}
\end{equation}
where $n^L_{jt} = \sum_i \delta(\theta^L_{ji}, \psi^L_{jt})$, $m^L_{jk}=\sum_{t}\delta(\psi^L_{jt}, G^{L-1}_{k})$. 
Next, we integrate out $G^L_B$, which is also distributed according to Dirichlet process:
\begin{equation}
\label{psiLmulti}
\begin{split}
\psi^L_{jt}|\psi^L_{11}, & \psi^L_{12}, \hdots, \psi^L_{21},  \hdots, \psi^L_{j,t-1}, \gamma^L, H^{L-1} \sim\\ &
\sum_{k=1}^{K^L} \frac{m^L_{\cdot k}}{m^L_{\cdot\cdot} + \gamma^L} \delta_{G^{L-1}_{k}} + \frac{\gamma^L}{m^L_{\cdot\cdot} + \gamma^L} H^{L-1}
\end{split}
\end{equation}

Note that $K^L$ here refers to the number of unique atoms $G^{L-1}_{k}$ already drawn from the base HDP of $H^{L-1}$.
Observe that each $\theta^L_{ji}$ variable gets assigned to one of the $G^{L-1}_{k}$ variables, from which $\theta^{L-1}_{ji}$
is drawn (recall $\theta^{L-1}_{ji} \sim \theta^L_{ji}$). Hence, the predictive 
distribution for $\theta^{L-1}_{ji}$, given  $\theta^L_{ji}=G^{L-1}_{k}$  is obtained by integrating out the 
corresponding grouplevel distribution $G^{L-1}_{k}$. Similarly, for any general level $l$, 
  given that $\theta^{l+1}_{ji}=G^l_k$, $\theta^{l}_{ji} \sim \theta^{l+1}_{ji}$, is drawn by integrating out 
 the group level distribution $G^{l}_{k}$. Hence,  for level $l\in\{L-1,\hdots, 1\}$, let 
 $\theta^l_{r:ji} \equiv \{\theta^l_{j'i'}: \theta^{l+1}_{j'i'} = G^l_r,\ \forall i',\ j' \leq j, \text{ and } i' < i,\ j' = j\}$
 denote the sequence of previous draws from $G^l_r$. Hence, 

 \begin{equation}
\label{thetalmulti}
\begin{split}
\theta^l_{ji}|\theta^{l+1}_{ji}=G^l_r, \theta^l_{r:ji},  \alpha^l, G^l_B \sim
\sum_{t=1}^{m^l_{r\cdot}} \frac{n^l_{rt\cdot}}{i-1+\alpha^l} \delta_{\psi^l_{rt}} + \frac{\alpha^l}{n^l_{r\cdot\cdot}+\alpha^l} G^l_B
\end{split}
\end{equation} 
where $n^l_{rt}=\sum_{\theta^l_{j'i'} \in \theta^l_{r:ji}} \delta(\theta^l_{j'i'},\psi^l_{rt})$, 
the number of times component $\psi^l_{rt}$ was picked. 
As $G^l_B$ is also distributed according to a Dirichlet Process, we can integrate it out similarly and write 
the conditional distribution of $\psi^l_{rt}$ as follows with $m^l_{ r k}=\sum_t \delta(\psi^l_{r t}, G^{l-1}_{k})$,
and $H^{l-1}$ is the previous level HDP : 
\begin{equation}
\label{psilmulti}
\begin{split}
\psi^l_{rt}|\psi_{11},  \psi_{12}, \hdots, \psi_{21},  \hdots, &\psi_{rt-1}, \gamma, H^{l-1} \sim 
\sum_{k=1}^{K^l} \frac{m^l_{\cdot k}}{m_{\cdot\cdot} + \gamma^l} \delta_{G^{l-1}_{k}} + \frac{\gamma^l}{m^l_{\cdot\cdot} + \gamma^l} H^{l-1}
\end{split}
\end{equation}
At level 0, the predictive distribution for $\theta^0_{ji}$, given $\theta^1_{ji}=G^0_r$ can be obtained by integrating 
out $G^0_r$ replacing $l$ with 0 in equation \ref{thetalmulti}. Similarly, the predictive distribution for $\{\psi^0_{rt}\}$, 
draws from $G^0_B$, can be obtained by integrating out $G^0_B$ as follows. 
\begin{equation}
\label{psi0multi}
\begin{split}
\psi^0_{rt}|\psi^0_{11}, & \psi^0_{12}, \hdots, \psi^0_{21},  \hdots, \psi^0_{r,t-1}, \gamma^0, \bar H \sim
\sum_{k=1}^{K^0} \frac{m^0_{\cdot j}}{m^0_{\cdot\cdot} + \gamma^0} \delta_{\phi_{k}} + \frac{\gamma^0}{m^0_{\cdot\cdot} + \gamma^0} \bar H
\end{split}
\end{equation} 
At this level, each $\theta^0_{ji}$ is assigned to a $\phi_{k}$ that are drawn from $\bar H$, the base distribution of the nHDP. 
Given the $\phi_{k}$ that corresponds to $\theta^0_{ji}$, the observed data is generated as $F(\phi_{k})$.
Note that each of the conditional distributions for $\theta^l_{ji}$ and $\psi^l_{rt}$ are similar to that for 
CRF (Eqns. \ref{eqn:thetaHDP} and \ref{eqn:psiHDP}). We interpret these distributions as a 
\textit{nested Chinese Restaurant Franchise} (nCRF), involving CRFs with multiple levels of nesting. 

We now describe in detail the restaurant analogy for the nested Chinese Restaurant Franchise. The nCRF comprises of  
multiple levels of CRF. At each level $l$, there exist multiple restaurants $\{G^l_{r}\}$, each containing 
a countably infinite number of tables.
Each table $t$ in restaurant $r$ of level $l$ is associated with a dish $\psi^l_{rt}$ from global menu of 
dishes specific to that level. 
$\{G^l_{B}\}$ is the distribution over the dishes in the global menu at level $l$ modeling the global popularity of the dishes. 
 
Imagine a customer on a culinary vacation. We trace the journey of this customer to show the process of generating $x_{ji}$, 
the $i^{th}$ word in the $j^{th}$ document through the dishes he selects at restaurants at various levels. 
The customer first enters the restaurant $j$ in the outermost level $L$ as the
$i^{th}$ customer and choses a table with index $t^L_{ji}$, based on the popularity of the table governed by $G^L_{j}$. 
Each table $t$ in this level $L$ restaurant $j$ is associated with a dish $\psi^L_{jt}$ from a global menu at level L.
Each of these dishes 
has a one-to-one correspondence with a unique restaurant at level $L-1$, leading to \textit{nesting} between CRF levels. 
We use the variable $\theta^L_{ji}=\psi^L_{jt_{ji}}$ to denote the level $L$ dish thus chosen by the customer, through his table selection, 
and $z^L_{ji}$ to denote the index of the dish within the global menu and $r^L_{ji}$ to denote the level $L-1$ 
restaurant corresponding to the dish chosen. The customer now enters the restaurant $r^L_{ji}$ at level $L-1$ and repeats this process by selecting a table based on 
the distribution $G^{L-1}_{r^L_{ji}}$.

At any intermediate level $l$, the customer enters the restaurant $r^{l+1}_{ji}$, governed by the dish chosen at the previous 
level. He then selects a table $t^l_{ji}$. Each table $t$ in this restaurant has a dish $\psi^l_{rt}$ from the global 
level $l$ menu governing the dish $\theta^l_{ji}$ chosen by the customer.Each dish $k$ in the global menu corresponds to a 
unique restaurant in the previous level. This process continues where at level 0, the customer enters restaurant $r^{1}_{ji}$
governed by the dish selected in level $1$. The customer then chooses a table $t^0_{ji}$ which is associated with a dish
$\psi^0_{rt}=\phi^0_k$, say for some $k \in \{1,2,\hdots\}$. The word $x_{ji} \sim F(\phi_k)$ is generated from the 
corresponding innermost level dish(topic) $\phi_k$.

\subsection{Variations of multi-level nHDP}
Recall that at any given level $l\in\{1,2,\hdots,L-1\}$ of (L+1)-nHDP, HDP distribution of the previous level $H^{l-1}$ becomes the base distribution of the $l^{th}$ level HDP, 
while the group level distributions at the previous level, $G^{l-1}_r, r \in {1,2,\hdots}$, become the atoms for 
the group level distributions at the  $l^{th}$ level, $G^l_{r} , r=1,2, \hdots,$. This leads to multi-level 
admixture modeling where each entity at level $l$ models a distribution over entities at level $l-1$. 
However, one can also consider a variation where entities at a given level are associated with a single 
entity at the previous level leading to a mixture instead of an admixture at this specific level. 
In other words, we replace a given level HDP $H^l$ with a DP to associate a single level-$l$ 
entity with the group at next level. This leads to multi-level model with admixtures at some levels and mixtures 
at other levels. We note that the DP-HDP model(for grouped data) that associates a single entity for each 
document (section $\ref{sec:model}$) is an instance of such a variation.  While these variations open avenues for 
investigating a new set of modeling techniques, we restrict our work to multi-level admixture modeling. Inference in these models should be an extension to that of our admixture model (refer section 5?).

%
%
%
%
%
%
\begin{table}[t]
{
\scriptsize
\centering
\begin{tabular}{|l|c|c|c|}
\hline
Notation & Description of Notation \\
\hline
\hline
$l$ & level index indicated in a superscript \\ \hline
$r$ & Restaurant index \\ \hline
$j$ & Document Index (Used instead of $r$ as index of observed group/restaurant at outermost level $L$)\\ \hline
$i$ & Word (customer) Index within document\\ \hline
$k$ & Dish index in various contexts\\ \hline 
\hline
$K^l$ & Number of dishes in the global menu at level $l$ \\ \hline
$T^l_r$ & Number of tables in restaurant $r$ of level $l$ \\ \hline
$L$ & Index of the outermost level ($l \in \{0, \hdots, L\}$)\\ \hline
\hline
$x_{ji}$ & $i^{th}$ word observed in  $j^{th}$ document \\ \hline
$t^l_{ji}$ &  Table index assigned to word $i$ of document $j$ for level $l$\\ \hline
\hline
$k^l_{rt}$  & Dish index assigned to table $t$ of restaurant $r$ at  level $l$  \\ \hline
$z^l_{j,i}$ &  Dish index at level $l$ assigned to word $i$ of document $j$\\ \hline
$r^l_{j,i}$ & Restaurant index at level $l$ (also level $l+1$ dish index)  for word $i$ of document $j$ \\ \hline
\hline
$\phi^l_k$ & $k^{th}$ dish in the global menu at level $l$ \\ \hline
$\psi^l_{r,t}$ & Dish assigned to $t^{th}$ table in $r^{th}$ restaurant at level $l$ \\ \hline
$\theta^l_{j,i}$ & Dish assigned to $i^{th}$ word of $j^{th}$ document at  level $l$  \\ \hline
\hline
$n^l_{r,t}$ & Number of customers at table $t$ in restaurant $r$ in level $l$ \\ \hline
$m^l_{r,k}$ & Number of tables restaurant $r$ in level $l$ that got assigned dish $k$ \\ \hline
\hline
$\bar H$ & Base distribution of nHDP \\ \hline
$H^l$ &  Base distribution of the HDP at level $l+1$ : $H^l \equiv HDP(\alpha^l,\gamma^l,H^{l-1})$ \\ \hline
$G^l_B$ & Base distribution at level $l$ for group level DP at level $l$\\ \hline
$G^l_r$ & $r^{th}$ Group level distribution at level $l$ \\ \hline
\hline
$\alpha^l$ & Concentration parameter of the group level DP at level $l$ \\ \hline
$\gamma^l$ & Concentration parameter of the base DP at level $l$ \\ \hline
\end{tabular}
\caption{Table describing notation used}
\label{notation}
}
\end{table}

\begin{figure}
\begin{minipage}{0.45\textwidth}
\centering
\includegraphics[trim={6cm 0 0 0},width=90mm]{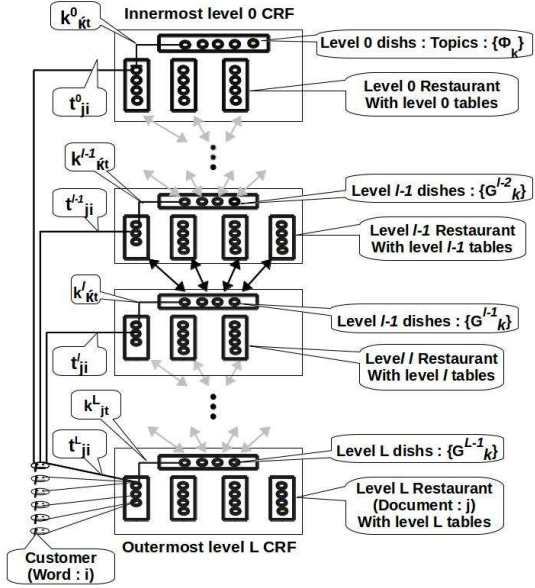}
\end{minipage} \hspace{0.1\textwidth}
\begin{minipage}{0.45\textwidth}
\centering
{\scriptsize
\begin{gather*}
 G^0_B \sim DP(\gamma^0,\bar H) \\
 r \in\{1,\hdots\} \tab G^0_{r} \sim DP(\alpha^0, G^0_B) \equiv H^0\\
 \hdots \\[2em]
 G^{l-1}_B \sim DP(\gamma^{l-1},H^{l-2}) \\
 r \in\{1,\hdots\} \tab G^{l-1}_{r} \sim DP(\alpha^{l-1}, G^{l-1}_B) \equiv H^{l-1}\\[2em]
 G^l_B \sim DP(\gamma^l,H^{l-1}) \\
 r \in\{1,\hdots\} \tab G^l_{r} \sim DP(\alpha^l, G^l_B) \equiv H^l\\
 \hdots \\[2em]
 G^L_B \sim DP(\gamma^L,H^{L-1}) \\
 j \in\{1,\hdots,M\} \tab G^L_{j} \sim DP(\alpha^L, G^L_B) 
\end{gather*}
}
\end{minipage}
\caption{Pictorial representation of Nested Chinese Restaurant Franchise (nCRF) on the left and the corresponding nHDP on the right}
\label{fig:nCRF}
\end{figure}

\subsection{ nHDP as Infinite Limit of a Multi-level Finate Mixture Models:}

A Dirichlet process mixture model can be derived as the infinite limit of a finate mixture model as the number of mixture 
componants tends to infinity[Eshwaran and Zaphaeur]. In \cite{HDP}, the authors have shown a similar result where 
a HDP can be constructed as an infinite limit of a collection of finite mixture models. We show a similar result for nHDP
as an infinite limit of multi-level finite mixture models. 

We first define the following collection of finite mixtures. Consider a multi-level setting, 
with $l\in{0, \hdots,L}$ denoting the level, where each level has multiple group level distributions 
$G^l_{r|K^l}$, $r\in\{1,\hdots,K^{l+1}\}$ and a base level distribution $G^l_{0|K^l}$.  
Note that we use the notation $G^l_{r|K^l}$ to denote that the distribution $G^l_{r}$ has a finite number ($K^l$) of atoms.
Further, these group level distributions at each level $l$ form the atoms of the next level $l+1$ defining 
multiple-levels of finite mixtures as follows. 
\begin{gather}
 \beta^0 | \gamma^0 \sim Dir (\frac{\gamma^0}{K^0}, \hdots , \frac	{\gamma^0}{K^0})  \tab \text{and} 
 \tab G^0_{0|K^0}= \sum_{k=1}^{K^0} \beta^0_k \delta{\phi_k} \nonumber \\
 \forall r \in \{1,\hdots, K^1\}, \text{ } \pi^0_{r} | \alpha^0 \sim Dir(\alpha^0 \beta^0) \tab \text{and} \tab G^0_{r|K^0}=\sum_{k=1}^{K^0} \pi_{rk} \delta{\phi_k} \nonumber \\
\text{For each level $l \in \{1,\hdots, L-1\}$,} \nonumber \\
\beta^l | \gamma^l \sim Dir (\frac{\gamma^l}{K^l}, \hdots , \frac{\gamma^l}{K^l}) \tab \text{and} \tab G^l_{0|K^l}= \sum_{k=1}^{K^l} \beta^l_{k} \delta_{G^{l-1}_{k}}  \nonumber \\
\forall r \in \{1,\hdots, K^{l+1}\}, \text{ } \pi^l_{r} | \alpha^l \sim Dir(\alpha^l \beta^l)  \tab \text{and} \tab G^l_{r|K^l}=\sum_{k=1}^{K^l} \pi^l_{rk} \delta_{G^{l-1}_{k}} 
\end{gather}
\begin{theorem}
\label{theorem1}
For each $l \in \{0,\hdots, L\}$,  with $G^l_{0|K^l}$ and $G^l_{r|K^l}$ defined as above, 
as $K^l \to \infty, \forall l \in \{0,\hdots, L\}$, $G^l_{0|K^l} \to G^l_0$ (with $G^l_0$ as defined in section 
\ref{sec:multilevel}), and $\forall r \in \{1,\hdots, K^{l+1}\}$, $G^l_{r|K^l} \sim DP(\alpha^l, G^l_0)$, tending 
to a draw from an $l$-level nHDP.
\end{theorem}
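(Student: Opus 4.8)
The plan is to prove the statement by induction on the level index $l$, at each step reducing the claim to the single-level finite-to-infinite convergence result for the HDP established in \cite{HDP}. The recursive definition of the finite mixtures mirrors the recursive nHDP construction of Section \ref{sec:multilevel} exactly: at every level the weights $\beta^l$ are drawn from a symmetric Dirichlet $Dir(\gamma^l/K^l, \ldots, \gamma^l/K^l)$ and the group weights from $Dir(\alpha^l \beta^l)$, the only difference across levels being the identity of the atoms ($\phi_k$ at level $0$, the random measures $G^{l-1}_k$ at level $l$). This suggests that a single convergence lemma, applied once per level, suffices.

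First I would isolate the \emph{single-level lemma}, which is the finite approximation to the HDP: if atoms $\{\phi_k\}_{k=1}^K$ are drawn iid from a base $\bar H$ and $\beta \sim Dir(\gamma/K, \ldots, \gamma/K)$, $\pi_r \sim Dir(\alpha \beta)$, then as $K \to \infty$ the random measures $\sum_k \beta_k \delta_{\phi_k}$ and $\sum_k \pi_{rk}\delta_{\phi_k}$ converge jointly in distribution to $G_B \sim DP(\gamma, \bar H)$ and $G_r \sim DP(\alpha, G_B)$ respectively. This is exactly the construction of the HDP as an infinite limit of finite mixtures in \cite{HDP}, and the convergence is established by showing that the finite-dimensional Dirichlet marginals over any measurable partition (the defining property in Eqn. \ref{eqn:dp}) converge to the corresponding DP marginals. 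This lemma is simultaneously my base case: for $l = 0$ the atoms are $\phi_k \sim \bar H$ and the lemma gives $G^0_{0|K^0} \to G^0_0 \sim DP(\gamma^0, \bar H)$ and $G^0_{r|K^0} \to DP(\alpha^0, G^0_0)$, which is the $0$-level nHDP.

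For the inductive step, I would assume that as $K^{l-1}, \ldots, K^0 \to \infty$ the atoms $\{G^{l-1}_k\}$ converge jointly in distribution (as random measures) to draws from the $(l-1)$-level nHDP, whose common law I denote $H^{l-1}$. Conditioning on a realization of these atoms, the level-$l$ construction is precisely an instance of the single-level lemma with the $K^l$ atoms being the measures $G^{l-1}_k$; applying it gives $G^l_{0|K^l} \to \sum_k \beta^l_k \delta_{G^{l-1}_k}$ with $\beta^l \sim GEM(\gamma^l)$, i.e. $G^l_0 = G^l_B \sim DP(\gamma^l, H^{l-1})$, and $G^l_{r|K^l} \to DP(\alpha^l, G^l_0)$, completing the induction.

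The main obstacle is that the atoms $G^{l-1}_k$ are themselves random and still converging, so I cannot condition on fixed atoms and quote the lemma verbatim; I need a \emph{stability} argument for the joint limit. Concretely, the crux is continuity of the map that sends a base measure to the law of a DP draw built on it: one must show that if the atom law $H^{l-1}$ is approached weakly by the finite-level laws, then the induced laws of $G^l_{0|K^l}$ and $G^l_{r|K^l}$ converge to the ones built on the limiting $H^{l-1}$. I expect to establish this by working in the Polish space of probability measures over the (recursively defined) space of measures, equipped with the weak topology, and verifying that the defining finite-dimensional Dirichlet marginals at level $l$ (Eqn. \ref{eqn:dp}) are continuous functionals of the atom law, so that convergence at level $l-1$ propagates to level $l$ via the portmanteau theorem on a convergence-determining class. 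Checking that this class of partition-based marginals is indeed convergence-determining for these nested random measures, and that the interchange of the simultaneous limits $K^0, \ldots, K^L \to \infty$ is legitimate, is the delicate part of the argument.
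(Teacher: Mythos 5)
Your overall strategy---induction on the level index, with the finite-symmetric-Dirichlet-to-DP convergence of Ishwaran--Zarepour (as used in \cite{HDP}) as the engine at each level---is the same as the paper's. The one substantive difference is in how the group-level measures $G^l_{r|K^l}$ are treated. You regard $\pi^l_r \sim Dir(\alpha^l \beta^l)$ as yet another finite approximation that must be passed to the limit, which forces you into the heavier machinery of your final paragraph: continuity of the base-measure-to-DP-law map, a convergence-determining class of partition marginals, and a justification for interchanging the simultaneous limits $K^0,\hdots,K^L \to \infty$. The paper avoids all of this with a simpler observation: for each \emph{finite} $K^l$, the measure $G^l_{r|K^l}$ already satisfies the defining property of Eqn.~\ref{eqn:dp} exactly with respect to the finitely-supported base measure $G^l_{0|K^l}$ and concentration $\alpha^l$, i.e.\ $G^l_{r|K^l} \sim DP(\alpha^l, G^l_{0|K^l})$ holds with no approximation. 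Hence only the base measure $G^l_{0|K^l} \to G^l_0$ needs a limiting argument, and the group-level claim follows by substitution. This makes the paper's per-level step lighter than yours. That said, your flagged ``main obstacle''---that the atoms $G^{l-1}_k$ at level $l$ are themselves random measures still converging, so the level-$l$ limit must be shown stable under convergence of the atom law---is a genuine subtlety that the paper's proof does not address at all (it simply asserts that ``a similar argument'' applies at each level). Your proposal is therefore somewhat more honest about where the real work lies, at the cost of proposing more apparatus than the paper deems necessary; incorporating the exact-DP observation would let you keep the careful treatment of limit propagation while discarding the separate convergence argument for the $\pi^l_r$ weights.
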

\begin{proof}
We note that as $K^0 \to \infty$, $G^0_{0|K^0} \to G^0_0$ where the convergence of measures is defined by 
$\int g dG^0_{0|K^0} \xrightarrow[]{D} \int g dG^0_0$ for all real valued functions $g$ measurable with respect to $G^0_0$ as shown by 
[Ishwaran and Zarepour, 2002].  We note that we already have $G^0_{r|K^0} \sim DP(\alpha^0, G^0_{0|K^0})$. 
This follows from the definition of the DP since $G^0_{r|K^0}$ follows equation \ref{eqn:dp} with respect to the 
base measure $ G^0_{0|K^0}$ and the concentration parameter $\alpha^0$.
As $K^0 \to \infty$, we have already established that $G^0_{0|K^0} \to G^0_0$. Hence it follows that 
$G^0_{r|K^0} \sim DP(\alpha^0, G^0_0)$ as $K^0 \to \infty$. 
for each level  $l \in \{1,\hdots, L\}$, having proved this result for all previous levels, assuming 
$K^{l'}\to \infty, \forall l'<l$, we can make a similar argument for level $l$ as that for level 0 
to conclude 
$G^l_{0|K^l} \to G^l_0$, and $\forall r \in \{1,\hdots, K^{l+1}\}$, $G^l_{r|K^L} \sim DP(\alpha^l, G^l_0)$.
This concludes the proof.
\end{proof}

{\bf Alternate construction based on the nCRF}: The following alternate construction based on the nCRF, is another way to show nHDP as an infinite limit of a 
collection of finite mixture models, similar to that in \cite{HDP}, using the table and the dish indices of nCRF from 
the restaurant analogy as follows. 
\begin{gather*} \text{For level $l\in\{1,\hdots,L\}$, } \tab \beta^l | \gamma^l \sim Dir (\frac{\gamma^l}{K^l}, \hdots , \frac{\gamma^l}{K^l}) \\
\forall r \in \{1,\hdots, K^l\},  \pi^l_{r} | \alpha^l \sim Dir(\frac{\alpha^l}{T^l_{r}}, \hdots, \frac{\alpha}{T^l_{r}}) 
\tab k^l_{rt} | \beta^l \sim \beta^l \tab \forall t\in\{1,\hdots,T^l_{r}\}  
\end{gather*}
In level L, for each outermost group $j\in\{1,\hdots,M\}$, for each observation, $i\in\{1,\hdots,N_j\}$,\\
\begin{gather}
\label{nCRF_finite}
t^L_{ji} | \pi^L_{j} \sim \pi^L_{j} \tab \text{and} \tab  z^L_{ji}=k^L_{jt^L_{ji}}  \\ \nonumber
%
 \forall l\in\{L-1,\hdots,0\} ,~~  \tab  t^l_{ji} | z^{l+1}_{ji}=r, \pi^l_{r} \sim \pi^l_{r} \tab \text{and} \tab  z^l_{ji}=k^l_{z^{l+1}_{ji}t^l_{ji}} \\ \nonumber
\tab x_{ji} | z^0_{ji}, \phi \sim \phi_{z^0_{ji}} 
\end{gather}
%
%
%
\begin{theorem}
 \label{theorem2}
For each $l \in \{0,\hdots, L\}$,  
As $K^l \to \infty, \text{ and } T^l_{r} \to \infty, \forall r\in \{1,\hdots, K^{l+1}\}$,
the generative process described above in equation \ref{nCRF_finite} is equivalent to the nHDP.
\end{theorem}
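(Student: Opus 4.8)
The plan is to reduce this statement to the preceding Theorem \ref{theorem1} by integrating out the table variables $t^l_{ji}$ and the table-to-dish assignments $k^l_{rt}$ from the nCRF construction of Equation \ref{nCRF_finite}, and showing that what remains is exactly the direct finite mixture model that sits above Theorem \ref{theorem1}. I would first take the limit $T^l_r \to \infty$ with $K^l$ held fixed, level by level, and only afterwards appeal to Theorem \ref{theorem1} to send $K^l \to \infty$.

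For the inner limit, fix a level $l$ and a restaurant $r$ and condition on $\beta^l$ together with the table--dish labels $\{k^l_{rt}\}_{t=1}^{T^l_r}$. The dish actually received by a customer seated at table $t$ is $k^l_{rt}$, so the induced probability that a customer in restaurant $r$ draws dish $k$ is the aggregated weight $\tilde\pi^l_{rk} = \sum_{t:\,k^l_{rt}=k}\pi^l_{rt}$. Because $\pi^l_r \sim Dir(\alpha^l/T^l_r,\dots,\alpha^l/T^l_r)$, the aggregation (lumping) property of the Dirichlet distribution gives, conditionally on the labels, $(\tilde\pi^l_{rk})_{k=1}^{K^l} \sim Dir\!\big(\alpha^l m^l_{r1}/T^l_r,\dots,\alpha^l m^l_{rK^l}/T^l_r\big)$, where $m^l_{rk}$ is the number of tables in restaurant $r$ carrying dish $k$. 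Since the labels $k^l_{rt}$ are i.i.d.\ draws from $\beta^l$, the strong law of large numbers yields $m^l_{rk}/T^l_r \to \beta^l_k$ almost surely as $T^l_r \to \infty$; each $\beta^l_k$ is strictly positive almost surely under the finite Dirichlet prior, so the Dirichlet parameters converge to $\alpha^l\beta^l_k$ and, by continuity of the Dirichlet law in its parameters, $\tilde\pi^l_r \xrightarrow[]{D} Dir(\alpha^l\beta^l)$.

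This is precisely the law assigned to $\pi^l_r$ in the direct finite construction preceding Theorem \ref{theorem1}. Since the observations $x_{ji}$ and the dish indices $z^l_{ji}$ depend on the tables only through these aggregated dish probabilities --- a customer's dish index is distributed as $\tilde\pi^l_r$ once the tables are marginalized --- the $T^l_r\to\infty$ limit of the nCRF process induces the same joint law over all dish indices $\{z^l_{ji}\}$ and observations $\{x_{ji}\}$ as the direct finite mixture model. Applying this simultaneously across all levels and restaurants reduces Equation \ref{nCRF_finite} to the construction on which Theorem \ref{theorem1} operates; invoking Theorem \ref{theorem1} to let $K^l\to\infty$ at every level then shows the limiting process is a draw from the $(L+1)$-nHDP.

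The main obstacle is the bookkeeping of the two nested limits: justifying that the sequential passage ``first $T^l_r\to\infty$, then $K^l\to\infty$'' is legitimate at every level at once, i.e.\ that the almost-sure convergence of the Dirichlet parameters propagates to weak convergence of the full nested generative law, and that the outer limit handled by Theorem \ref{theorem1} composes cleanly with the table-marginalization limit. Care is also needed where some $\beta^l_k$ could be small, but this is benign since the finite Dirichlet prior keeps all weights strictly positive almost surely, so no Dirichlet parameter degenerates to zero in the limit.
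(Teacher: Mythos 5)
Your proposal is correct, but it takes a genuinely different route from the paper's. The paper's proof is a one-step predictive-distribution argument: as $K^l \to \infty$ and $T^l_r \to \infty$, the Dirichlet--multinomial predictive rule for each finite Dirichlet in Equation \ref{nCRF_finite} tends to a CRP, so the draws of the indices $z^l_{ji}$ coincide in the limit with those of the nested Chinese Restaurant Franchise of Section \ref{nCRF}, which was already obtained from the nHDP by integrating out the DPs at every level. You instead marginalize the table variables first: the Dirichlet aggregation property turns $(\pi^l_{rt})_t$ together with the i.i.d.\ labels $k^l_{rt} \sim \beta^l$ into aggregated dish weights distributed as $Dir(\alpha^l m^l_{r1}/T^l_r,\hdots,\alpha^l m^l_{rK^l}/T^l_r)$, the strong law gives $m^l_{rk}/T^l_r \to \beta^l_k$, and continuity of the Dirichlet in its parameters recovers exactly the $\pi^l_r \sim Dir(\alpha^l\beta^l)$ of the direct finite construction, after which Theorem \ref{theorem1} supplies the $K^l \to \infty$ limit. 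What your route buys is an explicit reduction to Theorem \ref{theorem1} (so the two theorems are genuinely chained rather than proved in parallel) and a cleaner statement of what ``equivalent'' means, namely equality of the joint law of the dish indices and observations; what it costs is the sequential ordering of the two limits, which you rightly flag and which the paper's predictive-distribution argument sidesteps by treating $K^l$ and $T^l_r$ together. Given that the theorem statement does not specify a mode of joint convergence, and that the paper's own proof is a two-sentence sketch, your argument is at least as rigorous as the original; the only point I would tighten is the propagation of the per-restaurant distributional convergence to weak convergence of the full nested law, which follows because the hierarchy is a finite composition of conditionals each of which converges and the limiting conditionals are continuous in the conditioning variables.
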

\begin{proof}At each level $l$,  As $K^l \to \infty, \text{ and } T^l_{r} \to \infty, \forall r\in \{1,\hdots, K^{l+1}\}$,
the predictive distribution of the draw from each Dirichlet distribution above tends to a CRP 
and hence draws 
of $z^0_{ji}$ in the above construction in equation \ref{nCRF_finite} are the same as that from nCRF described in the previous section. 
Hence the multi-level finite mixture model in \ref{nCRF_finite} tends to nHDP in the infinite limit.
\end{proof}

In the case of L=1, with a single level of nesting, we once again note the similarities between the two-level nHDP and 
the the Author Topic Model(ATM) \cite{AT}. 
With $L=1$ and referring to the index of the outer most group with $j$ instead of $r$, 
$\{G^1_j\}$ parallel the distribution over authors in each document (uniformly distributed in ATM) while $\{G^0_r\}$
parallel the authors' distribution over topics. We note that the finite version of two--level nHDP additionally models the 
base distributions $\{G^1_B\}$ for the global popularity of authors and $\{G^0_B\}$ for the global popularity of topics
leading to a generalization of ATM.  


\section{Inference}
\label{sec:multi-inference}
We use Gibbs sampling for approximate inference as exact inference is intractable for this problem. 
The conditional distributions from the nCRF scheme lend themselves to an inference algorithm, 
where we sample at every level 
$l \in \{0, \hdots, L\}$, the table assignments {$t^l_{ji}$} for customers, and dish assignments {$k^l_{rt}$} for tables 
 where $1 \leq t \leq T^l_{r}$ and restaurant identifier ${r} \in \{1, \hdots, K^{l+1}\}$ 
(Recall $r$ is a restaurant identifier at level $l$ and the number of restaurants in level $l$ is same 
as the number of dishes in level $l+1$). Note that for the outermost level $L$, $K^{L+1}=M$, i.e the number of restaurants 
is the number of observed groups (or documents) in the outermost level. 
 
The conditional posterior for Gibbs sampling for these variables can be derived from the nCRF conditionals.
However, in such an approach, unlike the inference for a single level HDP, a naive approach of sampling all the 
above indices is intractable leading to an exponential complexity at each level due 
to the tight coupling between the variables. In this section, we first briefly describe 
such an nCRF inference technique (Scheme $1$) by sampling all the variables involved in the nCRF formulation to illustrate 
the computational intractability that arises due to the exponential complexity of this algorithm. 
Following this, we describe in more detail an alternate scheme (Scheme $2$) based on the 
direct sampling technique of HDP that overcomes this problem, that we use for experiments in 
section \ref{experiments} for entity-topic analysis.

In the appendix \ref{singledocInf}, we also discuss scheme $1$ in more detail for a special case, the two--level nHDP using 
which we experimentally demonstrate the difference in complexity between the two schemes.

\subsection{Inference Scheme 1: nCRF Inference}
\label{sec: ncrf}
In the basic nCRF scheme, the latent variables to be sampled as a part of the Gibbs sampling procedure are the assignment of 
tables to each customer $i$ belong to the observed group $j$ and dishes to tables at different levels. 
Hence, we wish to sample at every level,  {$t^l_{ji}$}
and {$k^l_{rt}$} where $r \in \{1, \hdots, K^{l+1}\}$ is a restaurant index at level $l$ also corresponding to 
a dish in level $l+1$ and $t \in \{1, \hdots, T^l_{r}\}$ is a table index in restaurant $r$. 
We start by sampling variables in level $0$, the deepest level, proceeding to variables in level $L$. 
We attempt to illustrate in this section, how the complexity of 
sampling increases, reaching exponential complexity, as we go from sampling variables in level $0$ through level $L$.

The following minor additions to notation are introduced for convenience during inference. 
We denote the set of all observed data as 
${\bf x}=\{x_{ji} : \forall j, i\}$.We denote the set of all customers going to table 
t of restaurant $r$ in level l as ${\bf x^l_{rt}}=\{ x_{ji} : z^{l+1}_{ji}=r, t^l_{ji}=t\}$. Further, a set with a subscript 
starting with a hyphen(-) indicates the set of all elements except the index following the hyphen. 


We start with sampling of level $0$ dish assignments to tables, conditioned on values of table and dish assignments at all 
other levels. Hence, we sample  $k^0_{rt}$ as follows, for $r \in \{1, \hdots, K^1\}$, $t \in \{ 1, \hdots, T^0_{r}\}$ by
integrating out $G^0_B$ (using Eqn. \ref{psi0multi})
\begin{align}
\label{level0:ncrf}
p(k^0_{rt} = \bar k| {\bf k^0_{-r,t}}, {\bf k^{-0}}, {\bf t}, {\bf x_{..}}) \propto \\ \nonumber
\begin{cases}
\frac{m^0_{\bar k}}{\gamma^0 + T^0_.}   p({\bf x^0_{rt} } | k^0_{rt} = \bar k, {\bf k^0_{-r,t}}, {\bf k^{-0}}, {\bf t}) \hfill \text{where } \bar k\in\{1, \hdots, K^1\}\\
\frac{\gamma^0}{\gamma^0 + T^0_.}  p({\bf x^0_{rt}}  | k^0_{rt} = K^0+1, {\bf k^0_{-r,t}}, {\bf k^{-0}}, {\bf t})  \hfill \text{  new level 0 dish}
 \end{cases}
\end{align}
The first term is obtained from the conditional probability of the CRP for choosing level $0$ dishes. 
We note that the likelihood terms  $p({\bf x^0_{rt}}  | k^0_{rt} = \bar k, {\bf k^0_{-r,t}}, {\bf k^{-0}}, {\bf t})$  and  
$p({\bf x^0_{rt}}  | k^0_{rt} = K^0+1, {\bf k^0_{-r,t}}, {\bf k^{-0}}, {\bf t})$  arise from the probability of 
all observed data or customers that go to table $t$ of restaurant $r$ at level $0$ that are affected by the assignment 
$k^0_{rt} = \bar k$. These terms can be simplified by integrating out the appropriate $\phi$ variables corresponding to the topic 
multinomials. (A detailed evaluation for these terms is shown in appendix \ref{nCRFlikelihood} for a special case of this 
inference algorithm for ungrouped data).
We further note that this update is similar to that in the direct sampling scheme for a single level HDP in [\cite{HDP}].
%

For the next level, we sample the update for dish assignment to tables belonging to level $1$ restaurants, $k^1_{rt}$, for each 
$r \in \{1, \hdots, K^2\}$, $t \in \{ 1, \hdots, T^1_{r} \}$. 
Let ${\bf S}^1_{rt}={\{t^0_{ji}: t^1_{ji} = t, z^2_{ji} = r \}}$, the set of level $0$ table 
assignments corresponding to all  customers ${j,i}$ who have been assigned the table $t$ in level $1$ restaurant $r$,
\begin{align}
p(k^1_{rt} = \bar k| {\bf k^1_{-r,t}}, {\bf k^{-1}}, {\bf t^{-0}}, {\bf x}) \propto \\ \nonumber
\begin{cases}
\frac{m^1_{\bar k}}{\gamma^1 + T^1_.}  p({\bf x^1_{rt}}  | k^1_{rt} = \bar k, k^1_{-r,t}, k^{-1}, \{-{\bf S}^1_{rt} \}) \hfill \text{where } \bar k=1, \hdots, K^1\\
\frac{\gamma^0}{\gamma^0 + T^0_.}  p({\bf x^1_{rt}} | k^1_{rt} = K^1+1, {\bf k^1_{-r,t}}, {\bf k^{-1}}, \{-{\bf S}^1_{rt} \})  \hfill \text{  new level 1 dish}
 \end{cases}
\end{align}
We note that the likelihood terms are conditioned on all table assignments 
except those in the set ${\bf S}^1_{rt}$ since changing the level 1 dish assignment $k^1_{rt}$ of 
the table in restaurant $r$ changes the level 0 restaurant that the customer enters, due to which table assignments
in set ${\bf S}^1_{rt}$ are not known.

Hence, evaluating the likelihood term requires marginalizing over all possible 
assignments for latent variables ${\bf S}^1_{rt}$. We note that each of these variables can take a value 
between $1, \hdots, T^0_{\bar k}+1$. This leads to $(T^0_{\bar k}+1)^{|{\bf S}^1_{rt}|}$ operations to simplify the likelihood term 
leading to an exponential complexity for evaluating the update $k^1_{kt}$ rendering this inference technique intractable.

We see that similarly, for a general level $l$, sampling $k^l_{rt}$ for $k= \in \{1, \hdots, K^{l+1}\}$, 
$t \in \{ 1, \hdots, T^0_{r}\} $ 
requires the marginalization over the following set ${\bf S}^l_{rt}$ of all table assignments in all previous levels 
$\hat l < l$ for the customers sitting at the particular table $t$ in level $l$ restaurant $r$ :
$${\bf S}^l_{rt}={\{t^{\hat l}_{ji}: \hat l<l, t^l_{ji} = t, z^{l+1}_{ji} = r \}}$$
We see that the cardinality of this set increases exponentially with increasing $l$ due to which this technique is intractable for a general $l$, 
the only exception being $l=0$ for a single level HDP where this technique is tractable as in equation \ref{level0:ncrf}.

\subsection{Inference Scheme 2: Direct Sampling Scheme}
\label{sec: direct}
To work around the exponential complexity encountered in the previous section, we adopt a technique similar to the 
direct sampling scheme in [\cite{HDP}] where the variables   {$t^l_{ji}$}, and {$k^l_{rt}, \forall l,j,i,r,t$}
are not explicitly sampled. Instead the variables $G^l_0$ are explicitly sampled for all levels $l$ as opposed to being 
integrated out, by sampling the stick breaking weights $\beta^l$  respectively. Further, 
we directly sample {$z^l_{ji}$}, the \emph{dish assignment} at level $l$ for each customer(word) $i$, in each group(document) $j$, 
avoiding explicit assignments of tables to customers and dishes to tables. 
However, in order to sample $\beta^l$, the table information is maintained in the form of the aggregated 
counts in each layer, {$m^l_{r k}$}, the number of tables at level $l$, in restaurant $ r \in \{1, \hdots, K^{l+1}\}$ 
assigned to dish  $k \in \{1, \hdots, K^{l}\}$. (Recall that each restaurant at level $l$ corresponds to a unique dish 
in level $l+1$. Hence, $ r \in \{1, \hdots, K^{l+1}\}$. )
Thus the latent variables that need to be sampled in the Gibbs sampling 
scheme are {$z^l_{ji}$},  $\beta^l$, {$m^l_{r k}$}, $\forall l,i,j, r, k$.

We introduce the following notation for the rest of this section. 
Let $\textbf{x} = (x_{ji} :$ all $j,i),$ 
${\bf x_{-ji}} = (x_{j'i'} : j' \neq j , i' \neq i),$
$\textbf{m} = (m^l_{r k} :$ all $r,k,l),$  
$\textbf{z} = (z^l_{ji} :$ all $j,i,l),$ 
${\bf z^l_{-ji}} = (z^{l'}_{j'i'} : j' \neq j , i' \neq i, l' \neq l),$
and $\beta^l_{new}= \sum_{k=K^l+1}^{\infty} \beta^l_k$
We now provide the sampling updates for dish assignments for customers at each level, starting from level $0$, 
conditioned on all other dish assignments at all levels. 

{\bf Sampling {$z^0_{ji}$}: } The conditional distribution for the dish assignment at level $0$,  $z^0_{ji}$, depends on the 
predictive distribution of the dish assignment $z^0_{ji}$, given all other dish assignment to customers at this level 
and all other levels and the emission probabilities of the final observed data $x_{ji}$ with the specific dish assignment. 
This is given by
\begin{equation*}
\begin{split}
 p(z^0_{ji} = p |z^1_{ji}=r , {\bf z^0_{-ji}},  {\bf m}, \beta, {\bf x}) 
 \propto p(z^0_{ji} = p| {\bf z^0_{-ji}}, z^1_{ji}=r ) p(x_{ji} | z^0_{ji} = p, {\bf x_{-ji}}) 
 \end{split}
\end{equation*}

To pick dish $p$ at level $0$, conditioned on the dish assignment at level $1$ as $r$, the first term 
$p(z^0_{ji} = p| {\bf z^0_{-ji}}, z^1_{ji}=r )$ can be split into two parts. One for 
 picking any of the existing tables from the level $0$ restaurant $r$ that get mapped to dish $p$ 
 and one from creating a new table in restaurant $r$ and assigning dish $p$ to it. 
In the instance of  choosing a new dish, a new table is always created in restaurant $r$ at level $0$. Hence,
 \begin{equation}
\label{inference_zji_a}
 p(z^0_{ji} = p| z^1_{ji}=r, {\bf z^0_{-ji}} ) \propto 
 \begin{cases}
\frac{n^0_{r.p} + \alpha \beta^0_p}{n_{r,.}+\alpha^0}   \hfill \text{  Existing dish}\\
\frac{\alpha^0 \beta^0_{new}}{n^0_{r.}+\alpha^0}   \hfill \text{  New dish}
 \end{cases}
\end{equation}

The likelihood term $p(x_{ji} | z^0_{ji} = p, {\bf x_{-ji}}) $ is the conditional density of $x_{ji}$ under level $0$ dish(topic) $z^0_{ji}=p$  given all data items except $x_{ji}$. 
Assuming the $0$ level dish is a topic sampled from a $V$ dimensional symmetric Dirichlet prior over the vocabulary 
with parameter $\eta$, i.e $\phi_p \sim Dir(\eta)$, the conditional can be simplified to the following expression, 
by integrating out $\phi_p$.
 \begin{equation*}
 p(x_{ji} = w | z^0_{ji} = p, {\bf x_{-ji}})  \propto \frac{ n^0_{pw} + \eta}{n^0_p. +  V \eta}
\end{equation*}
where $n^0_{pw}$ is the number of occurrences of level 0 dish(topic) $p$ with word $w$ in the vocabulary.
We note that this step is similar to that in [\cite{HDP}].

{\bf For any general level, sampling {$z^l_{ji}$}: } The conditional distribution for the dish assignment at level $l$ 
is computed as  
\begin{equation*}
\begin{split}
 p(z^l_{ji} = p | {\bf z^l_{-ji}}, z^{l+1}_{ji}=r, z^{l-1}_{ji}=q, {\bf z^l_{-ji}} ,  {\bf m}, \beta,  {\bf x})  \\
 \propto p(z^l_{ji} = p| {\bf z^l_{-ji}}, z^{l+1}_{ji} = r) p(z^{l-1}_{ji} = q| {\bf z^{l-1}_{-ji}}, z^l_{ji} = p )
 \end{split}
\end{equation*}

 The first term is the predictive distribution of $z^l_{ji}$ given the next level dish assignment $r$ (to specify
 which level $l$ restaurant the customer goes to), while the second term arises from the previous level dish 
 assignment $q$ that depends on the value of $z^l_{ji}$. Again, $p(z^l_{ji} = p| {\bf z^l_{-ji}}, z^{l+1}_{ji} = r)$ can be viewed as consisting of two terms. One from 
 picking an existing table in restaurant $r$ with dish assignment $p$ and one from creating a new table 
 in restaurant $r$ at level $l$ and assigning the dish $p$ to it. 
 Further, creation of a new dish always involves the creation of a new table. Hence, 
 \begin{equation*}
 p(z^l_{ji} = p | {\bf z^l_{-ji}}, z^{l+1}_{ji} = r ) \propto 
 \begin{cases}
\frac{n^l_{rp} + \alpha^l \beta^l_{p}}{n^l_{r.}+\alpha^l} \hfill \text{Existing dish}\\
\frac{ \alpha^l \beta^l_{new}}{n^l_{r.}+\alpha^l} \hfill \text{New dish}\\
 \end{cases}
\end{equation*}

Similarly 
\begin{equation*}
 p(z^{l-1}_{ji} = q | {\bf z^{l-1}_{-ji}}, z^{l}_{ji} = p ) \propto 
 \begin{cases}
\frac{n^{l-1}_{pq} + \alpha^{l-1} \beta^{l-1}_{q}}{n^{l-1}_{p.}+\alpha^{l-1}} \hfill \text{Existing dish}\\
\frac{ \alpha^{l-1} \beta^{l-1}_{new}}{n^{l-1}_{p.}+\alpha^{l-1}} \hfill \text{New dish}\\
 \end{cases}
\end{equation*}


{\bf Sampling $\beta$ : }
At each level $l$, the posterior of $G^l_0$, conditioned on samples observed from it, is also distributed as a DP due 
to Dirichlet-Multinomial conjugacy, and the stick breaking weights of $G^l_0$ can be sampled as follows:
$$(\beta^l_1, \beta^l_2 \hdots \beta^l_{K^l}, \beta_{new} ) \sim Dir(m^l_{.1}, m^l_{.2} \hdots m^l_{.K}, \gamma^l)$$

{\bf Sampling $m$ : }
$m^l_{r k}$ is the number of tables in level $l$ restaurant $r \in \{1, \hdots, K^{l+1}\}$ 
 that are assigned to the level $l$ dish $k \in \{1, \hdots, K^{l}\}$.
In other words, $m^l_{r k}$ is the number of tables created as  $n^l_{r.k}$ samples are drawn from $G^l_{r}$ in restaurant $r$ that correspond to a particular dish $k$. 
This is the number of partitions generated as samples are drawn from a Dirichlet Process with concentration parameter 
$\alpha^l \beta^l_k$ and are distributed according to a closed form expression [\cite{antoniak:aos74}]. 
However, we adapt an alternate method [\cite{Fox:AOAS2011}] for sampling $m^l_{rk}$  by drawing a 
total of $n^l_{r.k}$ samples with dish k, and incrementing the count $m^l_{r k}$ whenever a new table is created 
in restaurant $r$ with dish assignment k.

{\bf Sampling Concentration parameters: }
We place a vague gamma prior on the concentration parameters $\alpha^l$, $\gamma^l$ $\forall l$ with hyper parameters 
$\alpha_a , \alpha_b , \gamma_a, \gamma_b$ respectively. 
We use Gibbs sampling scheme for sampling the concentration parameters using the technique outlined in [\cite{HDP}].

\section{Experimental evaluation of inference complexity}
\label{sec:expinf}

The nCRF scheme (Scheme 1) is computationally more expensive than the direct sampling scheme.
Scheme 1, as described in section \ref{sec: ncrf}, runs to exponential complexity even for the 2 level 
nHDP. Hence, we introduced the direct sampling scheme in section \ref{sec: direct} to outline a tractable inference algorithm. 
In this section, we illustrate this through some examples.

First we perform experiments with the single level nHDP, to compare the inference (training time) 
with both these schemes. The results of this experiment is shown in figure \ref{L1:compare_runtime}.
We also compare the perplexity obtained on held out test data with both these schemes for the single level nHDP. 
The perplexity results on the NIPS dataset with 20 percent of the documents held out is shown in 
table \ref{L1perplexity}. We note that while the nCRF scheme (scheme 1) performs better in terms of perplexity,
the direct sampling scheme is faster. This difference in complexity increases exponentially as we add more levels 
to the nHDP.

To better illustrate the difference in computational complexity between the two schemes, in this section, 
we compare the runtime of these two algorithms for a special case of our 2-level nHDP model where there is a single 
restaurant in the outer most level. In this special setting, at the outer most level, the HDP can be replaced by a simple DP since 
there is no sharing of atoms required between restaurants.
We discuss both the naive nCRF and the direct sampling inference algorithm for this setting in detail in appendix \ref{singledocInf}.
We perform experiments on a 100 document subset of the NIPS dataset to compare runtime in this 
special setting. The results are shown in figure \ref{compare_runtime}.
We note that the direct sampling technique is order of magnitude faster with respect to runtime complexity as 
illustrated in the figures \ref{compare_runtime}.
\begin{figure}[!Htbp]
\begin{subfigure}{.48\textwidth}
\centering
\includegraphics[height=2.5in,width=2.5in]{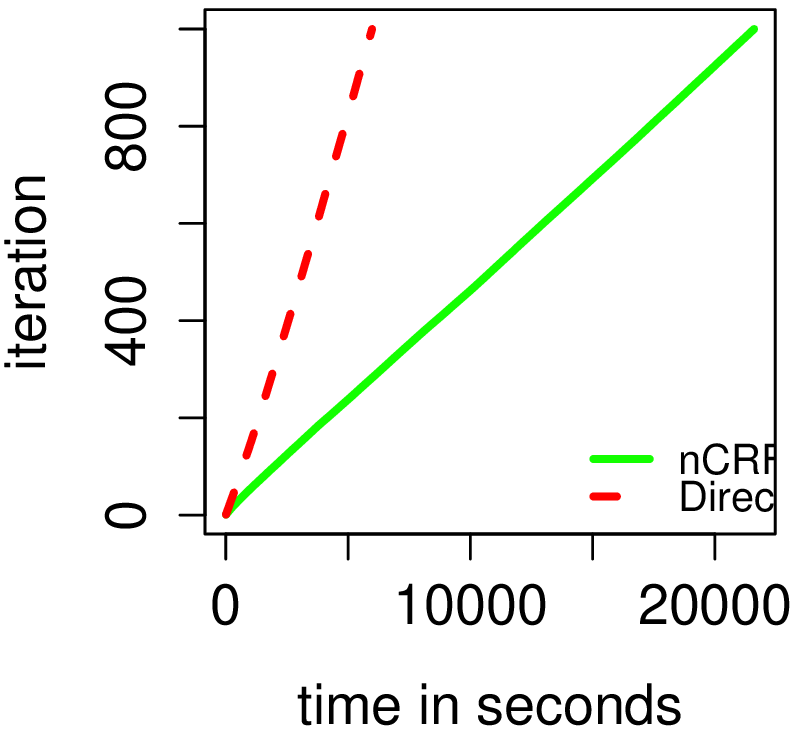}
\caption{\small \sl  Single level nHDP (HDP) : We see that direct sampling is faster than the nCRF scheme. However the difference is 
not as pronounced in the case with more levels}  
\label{L1:compare_runtime}
\end{subfigure}
\begin{subfigure}{.04\textwidth}
\end{subfigure}
\begin{subfigure}{.48\textwidth}
\centering
\includegraphics[height=2.5in,width=2.5in]{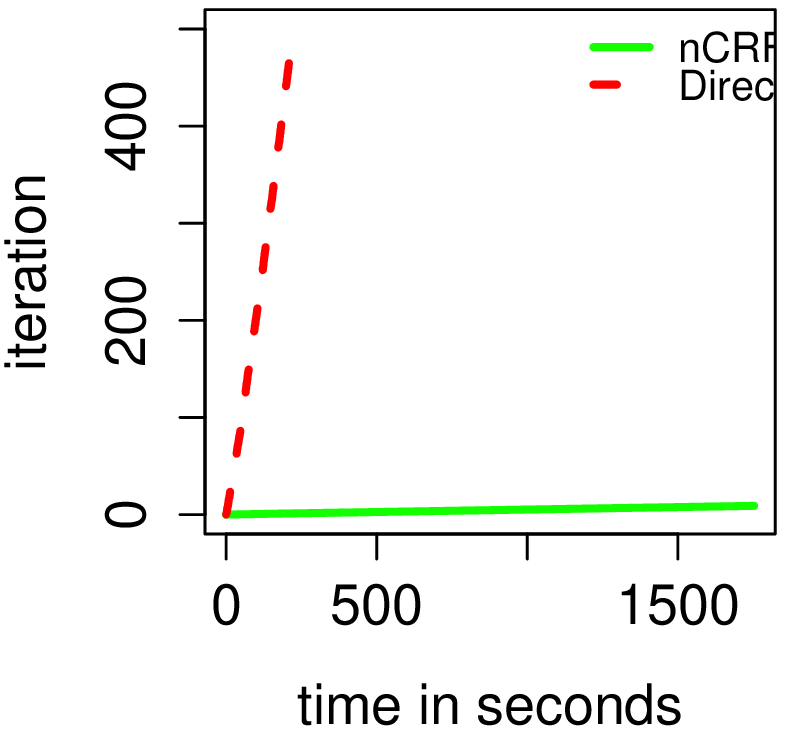}
\caption{\small \sl two-level nHDP for ungrouped data : We see that direct sampling runs for several iterations 
before the nCRF technique completes the first few Gibbs sampling iterations}  
\label{compare_runtime}
\end{subfigure}
\caption{Comparison of Runtime : Direct Sampling vs nCRF}
\end{figure}  
\begin{table}[t]
\centering
\begin{tabular}{|l|c|c|}
\hline
Model & Direct Sampling & nCRF \\
\hline
\hline
Perplexity & 2230 & 1937 \\
\hline
\end{tabular}
\caption{Perplexity comparison for single level nHDP for NIPS dataset with different inference schemes}\label{tab:L1co}
\label{L1perplexity}
\end{table}

\section{Non-Parametric entity-topic Modeling : Experimental Analysis}
\label{experiments}
In this section, we experimentally evaluate the proposed nHDP model in the context of non-parametric entity-topic modeling, 
with a two-level nHDP, for the task of modeling author entities who have collaboratively written research papers, 
 and compare its performance against available baselines. Specifically, we evaluate two different aspects: 
(1) how well the model is able to learn from the training samples and fit held-out data in terms of perplexity, 
(a) first, when all the authors are observed in training and test documents, and 
(b) secondly, when some of the authors are unobserved in training and test documents,
(c) finally, when all authors are unobserved, to understand the effect of multi-level HDP in comparison with a single level 
level HDP on perplexity.
(2) how accurately the model discovers hidden authors, who are not mentioned at all in the corpus. 

We consider the following models for the experiments:
(i) The author-topic model({\bf ATM}) (Eqn. \ref{eqn:atm}) where the number of topics is pre-specified, and all authors are observed for all documents.  
This is used as a baseline for (1a) above.
(ii) The Hierarchical Dirichlet Process ({\bf HDP}) (Eqn. \ref{eqn:hdp}) using the direct assignment inference scheme for fair comparison. 
We use our own implementation for this. 
Recall that the HDP is infers the number of topics, and does not use author information.
(iii) nHDP with completely observed entities ({\bf nHDP-co}), which assumes complete entity information to be available for all documents, but is learns topics in a nonparametric fashion. 
This can be imagined as an improvement over ATM where the number of topics does not need to be specified.
(iv)  nHDP with partially observed entities ({\bf nHDP-po}), which makes use of available entity information, but admits the possibility of entities being hidden globally from the corpus, or locally from individual documents. 
(v) nHDP with no observed entities ({\bf nHDP-no}), which does not make use of any entity information and assumes all entities to be globally hidden in the corpus.
For task (1a) above, the applicable models are the ATM, HDP (which ignores the entity information) and nHDP-co.
For task (1b) and (1c), the ATM does not work.
We evaluate HDP, and nHDP-po / nHDP-no.
It is important to point out that there are no available baselines in terms of entity-topic analysis for task (2) above 
when some or all of the authors are unobserved.

We use the following publicly available publication datasets for our experimental analysis.
The \textbf{NIPS } dataset\footnote{http://www.arbylon.net/resources.html}  is a collection of papers from Neural Information Processing Systems (NIPS) conference proceedings (volume 0-12). 
This collection contains 1,740 documents contributed by a total of 2,037 authors, with total 2,301,375 word tokens resulting in a vocabulary of 13,649 words.
A subset of the \textbf{DBLP Abstracts}  dataset\footnote{http://www.cs.uiuc.edu/~hbdeng/data/kdd2011.htm}  containing 
12,000 documents by 15,252 authors collected from 20 conferences records on the Digital Bibliography and Library Project (DBLP). 
Each document is represented as a bag of words present in abstract and title of the corresponding paper, resulting in a vocabulary size of 11,771 words.



\vspace{0.05in}
\noindent
{\bf 1. Generalization Ability: }
We now come to our first experiment, where we evaluate the ability of the models, whose parameters are learnt from a training set, to predict words in new unseen documents in a held-out test set.
We evaluate performance of a model $M$ on a test collection $D$ using the standard notion of perplexity [\cite{LDA}]: $exp(-\sum_{d\in D} p(w_d)|M)$.

In experiment (1a), all authors are observed in training and test documents.
To favor the ATM, which cannot handle new authors in test document, we create test-train splits ensuring that each author in the test collection occurs in at least one training document. 


\begin{table}[t]
\centering
\begin{tabular}{|l|c|c|c|}
\hline
Model & ATM & HDP & nHDP-co \\
\hline
\hline
Perplexity & 2783 & 1775 & 1247 \\
\hline
\end{tabular}
\caption{Perplexity of ATM, HDP and nHDP-co for NIPS}\label{tab:co}
\end{table}

Perplexity results are shown in Table \ref{tab:co}.
Recall that HDP and nHDP finds the best number of topics, while for ATM we have recorded its best performance across different value of $K$.
The results show that while knowledge of authors is useful, the ability of non-parametric topic models to infer the number of topics clearly leads to better generalization ability.

Next, in experiment (1b), we first create training-test distributions with reasonable author overlap by letting each author vote with probability $0.7$ whether to send a document to train or test, and majority decision is taken for each document. 
Next, authors are partially hidden from both the test and the train documents as following.
We iterate over the global list of authors and remove this author from all training and test documents with probability $p_g$.
We then iterate over each training and test document, and remove each remaining author of that document with probability $p_l$.
We experiment with different values of $p_g$ and $p_l$ to simulate different extents of missing information on authors.
$p_g=1$ and $p_l=1$ corresponds to (1c), the case where authors are completely unobserved. This setting enables us to compare
the two-level nHDP, with completely unobserved dishes at each level, with a HDP, to understand the relative merit of 
multi-level modeling over a single level in terms of perplexity.


\begin{table}[t]
\centering
\begin{tabular}{|l|c|c|c|c|c|c|}
\hline
Model & HDP & nHDP-no & nHDP-po & nHDP-po & nHDP-po & nHDP-co \\
\hline
\hline
$p_g$,$p_l$ & 1,1 & 1,1 & 0.6,0.6 & 0.4,0.4 & 0.2,0.2 & 0,0 \\
Perplexity NIPS & 2572 & 1882 & 1434 & 1266 & 1109 & 987\\
Perplexity DBLP & 1027 & 997 & 935 & 869 & 676 & 394\\
\hline
\end{tabular}
\caption{Perplexity for HDP and nHDP with varying percentage of hidden authors 
}\label{tab:po}
\end{table}

The results are shown in Table \ref{tab:po}.
We can see that more information available about the authors, the ability to fit held-out data improves.
More interestingly, even when no / very little author information is available, just the assumption about the existence of 
a discrete set of authors, i.e introducing an additional layer of HDP, leads to better generalization ability, corroborating
the need for multi-level modeling, as can be seen from the relative performance of HDP and nHDP-no.

\vspace{0.05in}
\noindent
{\bf 2. Discovering Missing Authors: }
Beyond data fitting, the most significant ability of our model is to discover entities which are relevant for documents in the corpus, but are never mentioned.
We perform a case study with the top $6$ most prolific authors in NIPS, by removing them completely from the corpus, and then checking the ability of the model to discover them in a completely unsupervised fashion. 
While it is possible to define as a classification problem the task of detecting of {\it locally missing} authors in individual documents when the author is observed in other documents, we reiterate that there is no existing baseline when an author is {\it globally hidden}.

We evaluate the accuracy of discovering hidden author as follows.
For each hidden author $h\in\{1\ldots H\}$, we create a $m$-dimensional vector $c_h$, where $m$ is the corpus size, with $c_h[j]$ indicating his authorship in the $j^{th}$ document.
We explored two possibilities for this `true' indicator vector: (a) binary indicators using the gold-standard author names for documents, and (b) the number of words written by that author in the document according to nHDP with completely observed authors (nHDP-co).
Similarly, we create an $m$-dimensional vector for each new author $n\in\{1\ldots N\}$ discovered by the nHDP-po, with $c_n[j]$ indicating his contribution (no. of authored words) in the $j^{th}$ document.
We now check how well the vectors $\{c_n\}$ correspond to the `true' vectors $\{c_h\}$. 
This is done by defining two variables $C_n$ and $C_h$, taking values $1\ldots H$ and $1\ldots N$ respectively, and defining a joint distribution over them as $P(h,n)=\frac{1}{Z}\text{sim}(c_h,c_n)$, where $Z$ is a normalization constant.
For $\text{sim}(c_h,c_n)$, we use cosine similarity between normalized versions of $c_h$ and $c_n$.
Mutual information $I(C_h,C_n)=\sum_{h,n}p(h,n)\log\frac{p(h,n)}{p(h)p(n)}$ measures the information that $C_h$ and $C_y$ share.
We used its normalized variant $NMI(C_h,C_n)=\frac{I(C_h,C_n)}{|H(C_h)+H(C_n)|/2}$ ($H(X)$ indicating entropy of $X$) which takes values between $0$ and $1$, higher values indicating more shared information.


First, we note that the best NMI achievable for this task, by replacing the true vectors $\{c_h\}$ for the discovered vectors $\{c_n\}$, is $0.86$ for case (a) and $0.98$ for case (b) above. 
In comparison,  using nHDP-po, we achieve NMI scores of $0.59$ for case (a) and $0.72$ for case (b). 
This indicates that the actual author distributions that the model discovers not only help in fitting the data, but also have reasonable correspondence with the true hidden authors.
We believe that this is a promising initial step in addressing this difficult problem.
\section{Conclusions}
\label{sec:conclusion}

In this paper, we have proposed the the nested Hierarchical Dirichlet Process as a prior for multi-level admixture modeling. 
We have also addressed the problem of entity-topic analysis from document corpora, where the set of document 
entities are either completely or partially hidden through the two level nHDP,
which consists of two levels of Hierarchical Dirichlet Processes, where one is the base 
distribution of the other. We explore inference algorithms for nHDP and using a direct sampling scheme for inference, 
we have shown that the nHDP is able to generalize better than existing models under varying available knowledge about 
authors in research publications, and is additionally able to discover completely hidden authors in the corpus.

\bibliography{nhdp}  

\begin{thebibliography}{19}
\providecommand{\natexlab}[1]{#1}
\providecommand{\url}[1]{\texttt{#1}}
\expandafter\ifx\csname urlstyle\endcsname\relax
  \providecommand{\doi}[1]{doi: #1}\else
  \providecommand{\doi}{doi: \begingroup \urlstyle{rm}\Url}\fi

\bibitem[A.~McCallum and Wang.(2004)]{ART}
A.~Corrada-Emmanuel A.~McCallum and X.~Wang.
\newblock The author recepient topic model for topic and role discovery in
  social networks.
\newblock 2004.

\bibitem[A.~Rodriguez and Gelfand.(2008)]{NDP}
D.~Dunson A.~Rodriguez and A.~Gelfand.
\newblock The nested dirichlet process.
\newblock \emph{Journal of the American Statistical Association},
  103(483):\penalty0 1131--1154, 2008.

\bibitem[Antoniak.(1974)]{antoniak:aos74}
C.~Antoniak.
\newblock Mixtures of {D}irichlet {P}rocesses with applications to {B}ayesian
  nonparametric problems.
\newblock \emph{Ann. Statist.}, 2(6):\penalty0 1152--1174, 1974.

\bibitem[Blunsom et~al.(2009)Blunsom, Cohn, Goldwater, and Johnson]{hdpapp1}
Phil Blunsom, Trevor Cohn, Sharon Goldwater, and Mark Johnson.
\newblock A note on the implementation of hierarchical dirichlet processes.
\newblock \emph{ACL}, 2009.

\bibitem[D.~Blei and Jordan.(2003)]{LDA}
A.~Ng D.~Blei and M.~Jordan.
\newblock Latent dirichlet allocation.
\newblock \emph{JMLR}, 2003.

\bibitem[D.~Blei and Tanenbaum.(2010)]{blei:jacm10}
M.~Jordan D.~Blei, T.~Griffiths and J.~Tanenbaum.
\newblock The nested chinese restaurant process and bayesian nonparametric
  inference of topic hierarchies.
\newblock \emph{JACM}, 2010.

\bibitem[D.~Newman and Smyth.(2006)]{ETM}
C.~Chemudugunta D.~Newman and P.~Smyth.
\newblock Statistical entity-topic models.
\newblock \emph{ACM SIGKDD}, pages 680--686, 2006.

\bibitem[D.~Wulsin and Litt.(2012)]{wulsin:icml12}
S.~Jensen D.~Wulsin and B.~Litt.
\newblock A {B}ayesian analysis of some nonparametric problemsa hierarchical
  dirichlet process model with multiple levels of clustering for human eeg
  seizure modeling.
\newblock \emph{ICML}, 2012.

\bibitem[Dai and Storkey.(2009)]{AD}
A.~Dai and A.~Storkey.
\newblock Author disambiguation: A nonparametric topic and co-authorship model.
\newblock \emph{NIPS Workshop on Applications for Topic Models Text and
  Beyond}, 2009.

\bibitem[E.~Erosheva and Lafferty.(2004)]{erosheva:pnas04}
S.~Fienberg E.~Erosheva and J.~Lafferty.
\newblock Mixed-membership models of scientific publications.
\newblock \emph{PANS}, 101-suppl(1), 2004.

\bibitem[E.~Fox and Willsky.(2011)]{Fox:AOAS2011}
M.~Jordan E.~Fox, E.~Sudderth and A.~Willsky.
\newblock A sticky hdp-hmm with application to speaker diarization.
\newblock \emph{Annals of Applied Stats.}, 5(2A):\penalty0 1020--1056, 2011.

\bibitem[Ferguson.(1973)]{ferguson:aos73}
T.~Ferguson.
\newblock A {B}ayesian analysis of some nonparametric problems.
\newblock \emph{Ann. Statist.}, 1(2):\penalty0 209--230, 1973.

\bibitem[H.~Kim and Han.(2012)]{ETM2}
J.~Hockenmaier H.~Kim, Y.~Sun and J.~Han.
\newblock Entity topic models for mining documents associated with entities.
\newblock \emph{ICDM}, pages 349--358, 2012.

\bibitem[J.~Paisley and Jordan.(2012)]{paisley:arxiv12}
D.~Blei J.~Paisley, C.~Wang and M.~Jordan.
\newblock Nested hierarchical dirichlet processes.
\newblock \emph{Arxiv}, 2012.

\bibitem[M.~Rosen-Zvi and Smyth.(2004)]{AT}
M.~Steyvers M.~Rosen-Zvi, T.~Griffiths and P.~Smyth.
\newblock The author-topic model for authors and documents.
\newblock \emph{UAI}, 2004.

\bibitem[Pitman.(2002)]{pitman:ln02}
J.~Pitman.
\newblock Gibbs sampling methods for stick-breaking priors.
\newblock \emph{Lecture Notes for St. Flour Summer School}, 2002.

\bibitem[Sethuraman.(1994)]{sethuraman:ss94}
J.~Sethuraman.
\newblock A constructive definition of {D}irichlet {P}riors.
\newblock \emph{Statistica Sinica}, 4:\penalty0 639--650, 1994.

\bibitem[Sharif-razavian and Zollmann.(2008)]{hdpapp2}
N.~Sharif-razavian and A.~Zollmann.
\newblock An overview of nonparametric bayesian models and applications to
  natural language processing.
\newblock \emph{Science}, pages 71--93, 2008.

\bibitem[Y.~Teh and Blei.(2006)]{HDP}
M.~Beal Y.~Teh, M.~Jordan and D.~Blei.
\newblock Hierarchical {D}irichlet processes.
\newblock \emph{Journal of the American Statistical Association}, 2006.

\end{thebibliography}
\section{Appendix}
\subsection{Two-level Inference with Ungrouped data at Outermost Level }
\label{singledocInf}
In this section, we describe the collapsed Gibbs sampling inference for the setting with ungrouped data at 
the outermost level in the entity-topic application for document modeling. This is a special case of the 
two level nHDP model with a DP in the outer level instead of a HDP. While we use notation similar to the nHDP 
inference described in section \ref{sec:multi-inference}, the observed data is indexed by a single index $i$ (the index $j$ vanishes
since there is no demarcation into groups i.e documents at the outermost level).
The nCRF representation for this setting involves assigning a dish(entity) $z^1_i$ with index $k^1_i$ to every customer 
$i$ based on $G^1_B$, the global distribution over entities based on which the customer enters an inner level restaurant $r^0_i$. 
At this restaurant the customer picks a table $\hat t=t^0_i$ which is assigned a corresponding dish $z^0_i$ with index 
$k^1_{\hat t}$ that corresponds to a topic. The observed data is generated based on the topic assignment thus attained.

We now describe the two inference schemes described in section \ref{sec:multi-inference} for the two level nHDP for 
entity topic modeling for this special case of 
ungrouped data. Note that in section \ref{sec:expinf}, an experimental comparison of both these schemes is shown.

\underline{\textit{Scheme 1: Naive nCRF based Sampling for entity-topic modeling of ungrouped data:}}

The latent variables to be sampled include $t^0_i, k^1_i$ for each observation $i=1, \hdots, M$ and $k^0_{rt}$ for 
restaurant $r=1, \hdots, K^1$ and table $t=1, \hdots, T^0_{r}$. Sampling $k^0_{rt}$ is similar to that in the full 
nCRF inference procedure described in the previous section and is not described here.

The update for selecting the level 0 table $t^0_i$ for each customer can be obtained as follows by integrating out the appropriate
$G^0_{r}$.
\begin{equation}
p(t^0_i = \hat t, z^1_i=r | t^0_{-i}, X, k) \propto
\begin{cases}
 \frac{n^0_{r \hat t}}{n^0_{r .} + \alpha^0} p (x_i | t^0_i = \hat t)  \hfill \text{Existing $\hat t = 1, \hdots, T^0_{r}$}\\
 \frac{\alpha^0}{n^0_{r .} + \alpha^0} p (x_i | t^0_i = T^0_{r}+1)   \hfill \text{New table $\hat t= T^0_{r}$}\\
 \end{cases}
\end{equation}
Where $ p (x_i | t^0_i = T^0_{r}+1)$ can be evaluated as follows considering the different level 0 dishes that can be assigned
to the new level 0 table. 
 \[p (x_i | t^0_i = T^0_{z^1_i}+1)=\sum_{r=1}^{K^0} \frac{m^0_{r}}{m^0_{.} + \gamma^0} p(x_i | z^0_i = r)
 + \frac{\gamma^0}{m^0_{.} + \gamma^0}  p(x_i | z^0_i = K^0 +1 ) \]
The overall cost of this update step is O($M$ $T^0_{max}$ $K^0$).
 
The update for $k^1_i$ can be obtained by integrating out $G^1_B$ as follows.
\begin{equation}
p(k^1_i = r | k^1_{-i}, X, t^0_{-i}) \propto
\begin{cases}
 \frac{m^1_{r}}{m^1_{.} + \gamma^1} p (x_i | k^1_i = r)  \hfill \text{Existing $r = 1, \hdots, K^1$}\\
 \frac{\gamma^1}{m^1_{.} + \gamma^1}  p (x_i | k^1_i =  K^1+1)  \hfill \text{New table $r= K^1+1$}\\
 \end{cases}
\end{equation}
Changing the value of $k^1_i$, invalidates the existing assignment to $t^0_i$, Hence
evaluating $p (x_i | k^1_i = r)$ requires summing over possible values of $t^0_i$ as follows 
$$p (x_i | k^1_i = r)=\sum_{\hat t=1}^{T^0_{r}} \frac{n^0_{r \hat t}}{n^0_{r \hat t}+\alpha^0} p(x_i |t^0_i=\hat t,k^1_i=r)
          +\frac{\alpha^0}{n^0_{r \hat t}+\alpha^0} p(x_i |t^0_i=T^0_{r},k^1_i=r)$$
In turn, $p(x_i |t^0_i=T^0_{r},k^1_i=r)$, corresponds to the case where a new level 0 table is created and requires 
summing over all potential value of level 1 dish assignments for this table. 
Hence, $p(x_i |t^0_i=T^0_{r},k^1_i=r$ and similarly  $p (x_i | k^1_i =  K^1+1)$ that involve the creation of 
a new level 1 table, can be evaluated as follows, 
$$p(x_i |t^0_i=T^0_{r},k^1_i=r) = 
\sum_{r=1}^{K^0} \frac{m^0_{r}}{m^0_.+\gamma^0} p(x_i | z^0_i=r)
+ \frac{\gamma^0}{m^0_.+\gamma^0} p(x_i | z^0_i=K^0+1)$$

\underline{\textit{Scheme 2: Direct Sampling for entity-topic modeling of Ungrouped data}}\\
The latent variables involved in the direct sampling scheme are $z^0_i$ and $z^1_i$ for each observation $i=1, \hdots, M$.
Sampling $z^0_i$ is similar to that for the case of full nHDP direct sampling
\begin{equation*}
\begin{split}
 p(z^0_{i} = p |z^1_{i}=r , {\bf z^0_{-i}},  {\bf m}, \beta, {\bf x}) 
 \propto p(z^0_{i} = p| {\bf z^0_{-i}}, z^1_{i}=r ) p(x_{i} | z^0_{ji} = p, {\bf x_{-i}}) 
 \end{split}
\end{equation*}
The first term can be expanded to the following, similar to that in the full nHDP with $n^0_{r.p}$ defined in section \ref{sec: direct}.
\begin{equation}
\label{eqn:single_inference_zi_a}
 p(z^0_{i} = p| z^1_{i}=r, {\bf z^0_{-i}} ) \propto 
 \begin{cases}
\frac{n^0_{r.p} + \alpha \beta^0_p}{n_{r,.}+\alpha^0}   \hfill \text{  Existing dish}\\
\frac{\alpha^0 \beta^0_{new}}{n^0_{r.}+\alpha^0}   \hfill \text{  New dish}
 \end{cases}
\end{equation}
The second term can be simplified similar to section \ref{sec: direct} by integrating out the $\phi$ multinomials corresponding
to the dishes in the inner most level.

The update for $z^1_i$ can be similarly obtained as 
\begin{equation*}
\begin{split}
 p(z^1_{i} = p | z^1_{-i}, z^{0}_{i}=q, {\bf z^1_{-i}} ,  {\bf m}, \beta,  {\bf x})  \\
 \propto p(z^1_{i} = p| {\bf z^l_{-i}}) p(z^{0}_{i} = q| {\bf z^{0}_{-i}}, z^1_{i} = p )
 \end{split}
\end{equation*}
The first term is the conditional of a simple CRP while the second term simplifies as 
\begin{equation*}
 p(z^{0}_{i} = q | {\bf z^{0}_{-i}}, z^{1}_{i} = p ) \propto 
 \begin{cases}
\frac{n^{0}_{pq} + \alpha^{0} \beta^{0}_{q}}{n^{0}_{p.}+\alpha^{0}} \hfill \text{Existing dish}\\
\frac{ \alpha^{0} \beta^{0}_{new}}{n^{0}_{p.}+\alpha^{0}} \hfill \text{New dish}\\
 \end{cases}
\end{equation*}


\end{document}